\newtheorem{mydef}{Definition}
\begin{document}

\fancyhead{}

\title{DeepDyve: Dynamic Verification for Deep Neural Networks} 
%
\author{Yu Li$^*$, Min Li$^*$, Bo Luo, Ye Tian, and Qiang Xu}

\affiliation{
  \institution{\underline{CU}hk \underline{RE}liable Computing Laboratory (CURE) \\
  Department of Computer Science and Engineering \\
  The Chinese University of Hong Kong, Shatin, N.T., Hong Kong \\
  Email: \{yuli, mli, boluo, tianye, qxu\}@cse.cuhk.edu.hk}
}
\thanks{$^*$These two authors contributed equally.}
\renewcommand{\authors}{Yu Li, Min Li, Bo Luo, Ye Tian and Qiang Xu}




\begin{abstract}

Deep neural networks (DNNs) have become one of the enabling technologies in many safety-critical applications, e.g., autonomous driving and medical image analysis. DNN systems, however, suffer from various kinds of threats, such as adversarial example attacks and fault injection attacks. While there are many defense methods proposed against maliciously crafted inputs, solutions against faults presented in the DNN system itself (e.g., parameters and calculations) are far less explored. In this paper, we develop a novel lightweight fault-tolerant solution for DNN-based systems, namely \textbf{DeepDyve}, which employs pre-trained neural networks that are far simpler and smaller than the original DNN for dynamic verification. The key to enabling such lightweight checking is that the smaller neural network only needs to produce approximate results for the initial task without sacrificing fault coverage much. 
We develop efficient and effective architecture and task exploration techniques to achieve optimized risk/overhead trade-off in DeepDyve. Experimental results show that DeepDyve can reduce 90\% of the risks at around 10\% overhead.
\end{abstract}

\begin{CCSXML}
<ccs2012>
<concept>
<concept_id>10010147.10010257.10010293.10010294</concept_id>
<concept_desc>Computing methodologies~Neural networks</concept_desc>
<concept_significance>500</concept_significance>
</concept>
<concept>
<concept_id>10010583.10010750.10010751.10010757</concept_id>
<concept_desc>Hardware~System-level fault tolerance</concept_desc>
<concept_significance>500</concept_significance>
</concept>
</ccs2012>
\end{CCSXML}

\ccsdesc[500]{Computing methodologies~Neural networks}
\ccsdesc[500]{Hardware~System-level fault tolerance}

\keywords{Deep learning;  Fault injection attack; Dynamic verification} 

\maketitle
\section{Introduction}
\label{sec:intro}

Machine learning with \textit{deep neural networks} (DNNs) can produce results that have surpassed human-level performance in many challenging tasks lately, and it keeps improving. Consequently, DNNs have become one of the foundation techniques in artificial intelligence (AI) applications. Lots of them (e.g., autonomous driving and medical image analysis) are safety- and security-sensitive. 

Many researchers in the machine learning community believe that DNNs are rather robust to faults~\cite{lecun1990optimal, han2015deep}, wherein removing some neurons or parameters leads to a graceful degradation in model accuracy. However, practical faults do not manifest themselves as the elimination of individual weights or neurons. Instead, they lead to bit-flips on DNN parameters or activations.  Li et al. ~\cite{li2017understanding} conducted a case study on the Eyeriss DNN accelerator~\cite{chen2016eyeriss} under transient faults.
Their results show that the FIT rate caused by random soft errors is far beyond the one required by the ISO 26262 standard (10 FIT\footnote{Failure-in-Time Rate: 1 FIT = 1 failure per 1 billion hours.})~\cite{iso26262} for the functional safety of road vehicles. Comparing to random errors, transient faults caused by malicious attacks are more severe. A recent attack named \emph{DeepHammer}~\cite{yao2020deephammer} shows that it can successfully tamper DNN inference behavior in practical setup, wherein the accuracy of multiple DNN classification systems are reduced to be as low as random guess within a few minutes, by leveraging the rowhammer vulnerability of DRAM used in the system. 



There are a few recent research towards fault-tolerant DNN system designs. In~\cite{li2017understanding}, Li \emph{et al.} proposed to set up a simple threshold to detect those faults that lead to drastic changes in DNN parameters. Such symptom-based error detectors have very little hardware overhead, but their detection capabilities are quite limited, especially for those quantized DNNs used in safety-critical embedded systems. They are also not applicable to defend against malicious faults~\cite{zhao2019fault, rakin2019bit, yao2020deephammer}. 
To tackle this problem, a replication-based error detection technique for DNN systems was proposed in~\cite{li2019d2nn}, but its overhead is quite high, requiring 40\% extra computation to reach 60\% fault coverage on CIFAR-10 dataset. 

This paper aims for a solution with sufficient fault coverage yet little overhead for DNN-based classification systems.
The term \textit{dynamic verification} was introduced in~\cite{austin1999diva}, which detects errors in a complex super-scalar core by checking it with a core that is architecturally identical but micro-architecturally far simpler and smaller. Unlike symptom-based error detection techniques that check for abnormal behaviors, dynamic verification techniques check for in-variants in the system that are nearly always true over all possible executions.
Following this idea, the proposed solution, namely \emph{DeepDyve}, deploys a small neural network (referred to as the \emph{checker DNN}) to approximate the original complex DNN model (referred to as the \emph{task DNN}), and checks whether they produce \emph{consistent} outputs in an end-to-end manner. If their results do not match, re-computation on the task DNN is performed for potential fault recovery, if any. 



Unlike~\cite{austin1999diva}, the result produced by the small checker DNN for error detection is only an approximate result. Hence, there will be both \emph{false positives} (i.e., flagging nonexistent failures) and \emph{false negatives} (i.e., miss to report failures) with DeepDyve. False positives result in unnecessary re-computation cost, while false negatives lead to fault coverage loss. Consequently, it is essential to achieve high \textit{consistency} between the task DNN and the checker DNN under the fault-free situation. We formulate the checker DNN design problem as a design exploration problem, wherein we evaluate a set of candidate small DNN designs for the given big task DNN model and choose the one with optimized coverage/overhead trade-off 
as the checker DNN. 

On the one hand,  we obtain the set of candidate checker DNNs by compressing and transferring the task DNN model's knowledge through knowledge distillation~\cite{hinton2015distilling}.
On the other hand, the given classification task might be too complicated for the much smaller checker DNNs. Under such circumstances, we allow the checker DNN to perform the classification task with reduced complexity. For example, given a task to classify ten objects wherein two kinds of objects are easily confused, we could allow the checker DNN to perform a simpler problem with these two objects treated as one class. By doing so, there will be much less false positives in dynamic verification, at the cost of more false negatives since we would not be able to identify those faults that result in misclsssification between these two kinds of objects. Consequently, we need to carefully perform task simplification for the checker DNN design to strike an optimized balance between coverage and overhead.


From the safety and security perspective, misclassifying different classes often has quite different impacts. For example, in a traffic sign recognition system, misclassifying a "Yield" sign as a "Stop" sign does not cause much trouble, but the opposite misclassification may cause severe traffic accidents. We need to consider such risk impacts in the checker DNN design. That is, we should be more concerned about the risk/overhead trade-off instead of coverage/overhead trade-off.

The proposed DeepDyve solution considers the above issues, and the main contributions of this paper include:
\begin{itemize}
\item To the best of our knowledge, DeepDyve is the first dynamic verification technique for resilient DNN designs, which uses a far simpler and smaller checker DNN for online error detection and recovery. 

\item We propose a novel two-stage checker DNN design methodology, which explores both the checker DNN architectures and task simplification possibilities. In particular, we propose a novel checker DNN architecture exploration technique with theoretical guarantees. Also, being able to manipulate the tasks performed on the checker DNN dramatically increases the solution space of DeepDyve.


\item DeepDyve leverages the uneven risk probabilities and safety impact among classes to guide the design exploration procedure. Experimental results on CIFAR-10, GTSRB, CIFAR-100, and Tiny-ImageNet datasets show that it can reduce up to 90\% of the risks at around 10\% computational overhead.  


\end{itemize}


We organize the remainder of this paper as follows. Section~\ref{sec:backgorund} presents the preliminaries and motivation of this work. Then, we give an overview of DeepDyve in Section~\ref{sec:overview}. Next, we detail the checker DNN architecture exploration and task exploration techniques in Section~\ref{sec:archi} and~\ref{sec:task}, respectively. Experimental results are presented in Section~\ref{sec:experiment}. Then, we discuss the applications and limitations of DeepDyve in Section~\ref{sec:discussion}, followed by the survey of related works in Section~\ref{sec:related}. Finally, Section~\ref{sec:conclusion} concludes this paper.

\section{Preliminaries and Motivation}


In this section, we first present the impact of hardware faults (random
or malicious) on the reliability and security of DNN systems in Section \ref{subsec:nnunderfaults}. Then, we provide the threat model in Section \ref{subsec:threatmodel}. At last, we illustrate 
the motivation for the proposed DeepDyve solution in Section \ref{subsec:motivation}.

\label{sec:backgorund}

\subsection{DNN Systems under Faults}
\label{subsec:nnunderfaults}

There are mainly two types of attacks during the inference of DNN systems: adversarial example attack and fault injection attack. Adversarial example attacks~\cite{GoodfellowSS14, MadryMSTV18} try to fool DNN systems by crafting subtle malicious perturbations on inputs, and there is a vast body of research on this topic~\cite{KurakinGB17, luo2018towards, papernot2017practical, meng2017magnet}. By contrast, fault injection attack aims to break the system by injecting faults into the internal system execution pipeline, e.g., flipping data bits in processing elements. This problem is less explored in the literature, and the overhead of existing defense techniques is quite high.   

Various types of transient faults can be introduced into DNN systems by attackers, e.g., 
clock glitch attack \cite{matsubayashi2016clock}, voltage glitch attack \cite{romailler2017practical}, and rowhammer attack~\cite{kim2014flipping}. 
Besides malicious faults, transients faults could also occur due to environmental perturbations such as alpha particle strikes. 


\vspace{5pt}
\noindent
\textbf{The Impact of Transient Faults.}
Transient faults may occur at the data paths and buffers of processing units~\cite{li2017understanding} or inside the memories of DNN-based systems~\cite{hong2019terminal}. Such faults would manifest themselves as errors in DNN calculations or intermediate values~\cite{reagen2018ares} during inference. A failure occurs when errors propagate to the outputs of the system and cause behavioral changes, which could lead to catastrophic consequences in safety-critical applications, e.g., misclassifying a "Stop" sign as a "Yield" sign in autonomous vehicles and taking the wrong action. In the following, if not specified, faults, errors and failures are used in an exchangeable manner, and we are only concerned with those faults that cause misclassifications. 

\vspace{5pt}
\textbf{Existing Attacks.}
Recently, DNNs are shown to be vulnerable to fault injection attacks~\cite{liu2017fault}. In
\cite{hong2019terminal}, the authors estimated that 40--50\% of the parameters in a DNN model could lead to an accuracy drop greater than 10\% when bit-flips occur in their data representation. While it was shown that quantized DNNs are more resilient to fault injection attacks, the recent progressive BitFlip Attack (BFA) \cite{rakin2019bit} proposed by Rakin et. al. can reduce the accuracy of a quantized ResNet-18 from 68.9\% to 0.1\% with only 13 bit-flips. BFA combines gradient ranking and progressive search to identify those vulnerable bits that degrade model accuracy significantly when flipped. The follow-up work DeepHammer by Yao \textit{el al.} ~\cite{yao2020deephammer} proves the effectiveness of BFAs in practice, by row-hammering against various real DNN systems.


\begin{figure}[t]
    \centering
    \includegraphics[width=0.8\linewidth]{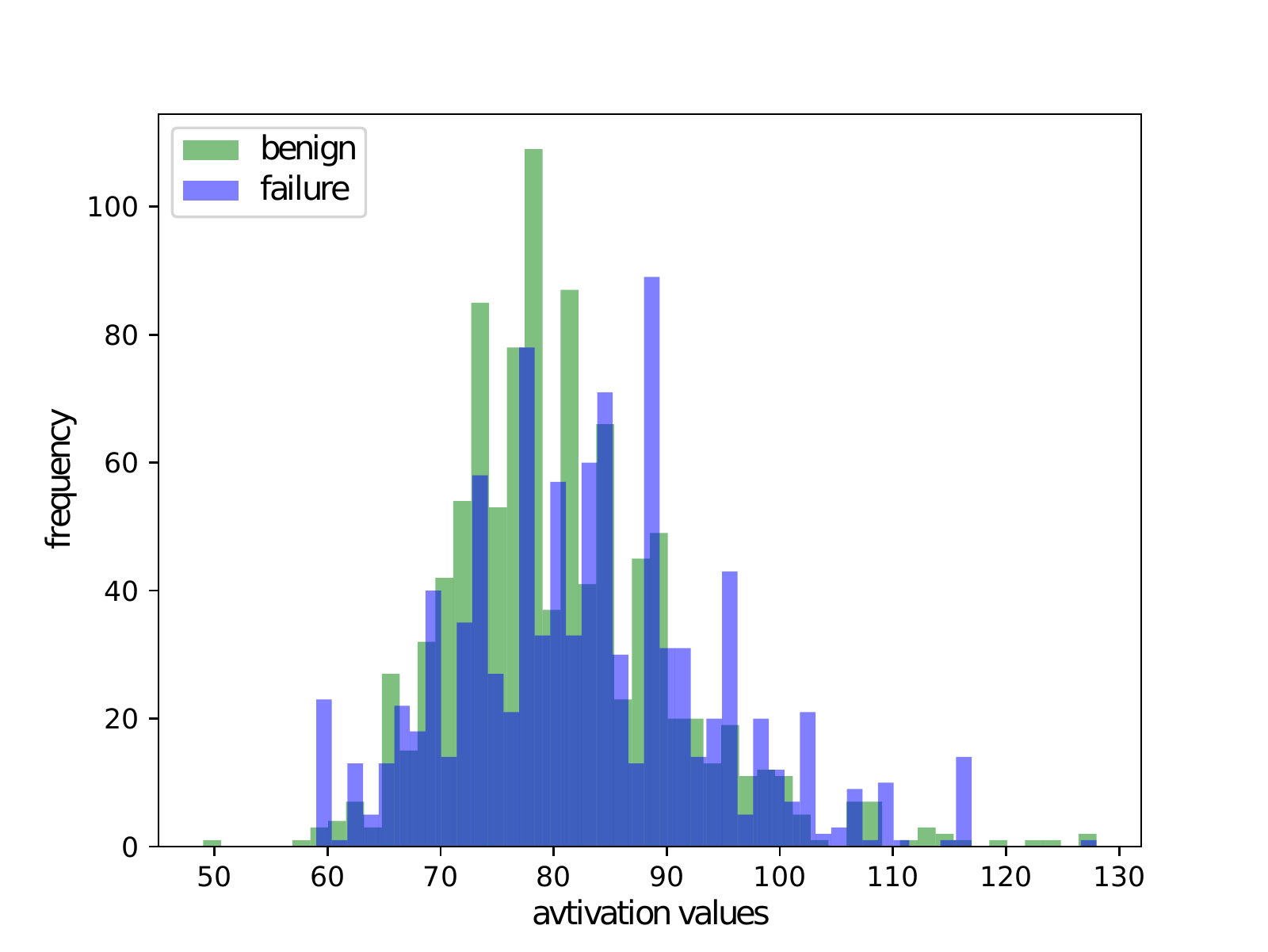}
    \caption{Comparison of the maximum activation value  between benign and failure cases (because of fault attack) from a quantized (8-bit integer) classifier. 
    }
    \label{fig:maxActivation}
\end{figure}

\vspace{5pt}
\textbf{Existing Defenses.}
For DNNs implemented on floating-point machines, only a small fraction of the dynamic range provided by the data type is used. 
If a fault makes the magnitude of intermediate output values huge, it is likely to lead to a failure. Based on this observation, Li \emph{et al.} proposed to use a simple threshold to detect those faults that lead to intermediate outputs beyond it~\cite{li2017understanding}. To be specific, before deployment, they record the value ranges $(X_{min}, X_{max})$ of the output for each layer. After deployment, the output value range is then checked in the run-time. They consider a fault is detected if there are output values beyond the range $(1.1 \times X_{min}, 1.1 \times X_{max})$. 
This anomaly detector has very little hardware overhead, but its detection capabilities are quite limited, especially for recent attacks on quantized DNN systems.
We show an example in Figure ~\ref{fig:maxActivation} where the threshold-based anomaly detector fails to detect faults in quantized classifiers. For the threshold detector to detect faults, at least the maximum activation value should be beyond the normal range. However, our experiment shows that the maximum activation values from all failure cases (due to hardware fault) are not bigger than the normal boundary. 

Li \emph{et al.}~\cite{li2019d2nn} design a replication-based error detection technique for deep neural networks. However, their overhead is quite high. They spend 40\% overhead to reach 60\% fault coverage on CIFAR-10.
In the DeepHammer work~\cite{yao2020deephammer}, the authors discuss a few potential mitigation techniques but do not provide any quantitative results for their effectiveness.

\subsection{Threat Model}
\label{subsec:threatmodel}
In this work, we consider the attacker is trying to compromise the accuracy of a DNN system by maliciously injecting faults into it. Unlike crafting malicious inputs to fool DNN systems, we target faults presented in the system internals, i.e., processing elements, buffered weights, and intermediate values stored in on-chip buffers or memories, and so on. We consider the attacker succeeds if the model's output class is different from the one obtained in an attack-free environment.

We assume the attackers have full knowledge of the DNN and its deployment on the device, including neural network topology, parameters, and low-level implementation details, e.g., the position of intermediate values stored in memories. Weak attackers could launch random bit-flips, while for strong attackers, they can precisely locate and launch fault injection in the processing pipeline.

Moreover, we assume the same transient faults would not occur in consecutive DNN inference runs. Firstly, reliability threats rarely occur and the probability to occur repeatedly in a short period is negligible. Secondly, it is very difficult, if not impossible, to launch the same faults repeatedly in a DNN system.  
For example, in \cite{yao2020deephammer}, launching a rowhammer attack  requires long preparation time (several minutes). As long as the DNN inference time is short (and usually it is), attackers do not have sufficient time to launch the same attack in the second run.

\subsection{Motivation}
\label{subsec:motivation}
For any classification problem, there could be many DNN models with different size/accuracy trade-offs to solve it.
Although big models often have higher accuracy, many inputs can be correctly handled by small models.
Therefore, we could employ a smaller checker DNN to perform the same task, and they should output the same results in \emph{most} cases when faults do not occur. In this way, the task model can be dynamically verified for online error detection and recovery. Note that, it is not possible to achieve deterministic dynamic verification for such systems because the outputs of a simple model cannot achieve 100\% consistency with that of the original model.

As discussed earlier, because the checker DNN is less accurate, there will be false positives and false negatives. 
Generally speaking, the larger the checker DNN is, the more accurate it is~\cite{tan2019efficientnet}, but it does not necessarily lead to larger computational overhead. We use the following example to illustrate the impact of checker DNN design.

Suppose the task DNN model is with $\sim$100\% accuracy, 
and there are three candidate checker DNN models: A, B, and C. Their relative sizes compared to the task DNN and their classification accuracy are shown in the second and the third column of Table~\ref{tab:motivation}. Consider transient fault-induced failures are rare events, the computational overhead of the three models is estimated in the fourth column, which is the sum of the computational cost of the checker DNN itself (as it is always on) and the re-computation cost on the task DNN when the checker DNN produces a different classification result (false positive cases). 

\begin{table}
\caption{A Motivational example.}
\label{tab:motivation}
\centering
\begin{tabular}{|c|c|c|c|}
\hline
Checker DNN & Size  & Accuracy & Computational Overhead   \\ \hline
A           & 1\%   & 80\%     & $\sim$21\%  \\ \hline
B           & 5\%   & 92\%     & $\sim$13\%    \\ \hline
C           & 10\%  & 94\%     & $\sim$16\%   \\ \hline

\end{tabular}
\end{table}

\begin{figure*}[t]
    \centering
    \includegraphics[width=0.95\linewidth]{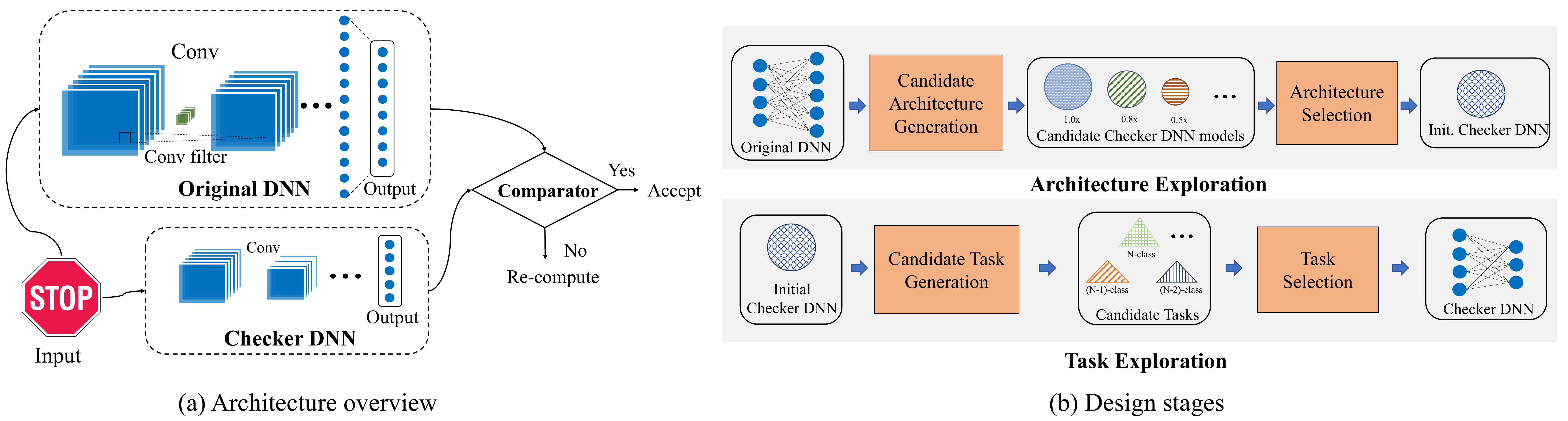}
    \caption{DeepDyve architecture and the design flow.}
    \label{fig:overview}
\end{figure*}

If the misclassifications caused by faults are evenly distributed among all classes, the fault coverage would be similar to the accuracy of the checker DNN. For this particular example, checker B can achieve $\sim$92\% fault coverage with 5\% hardware cost and $\sim$13\% computational cost, which significantly outperforms existing fault-tolerant solutions for DNN designs. This has motivated the proposed DeepDyve solution in this paper. 

In practice, fault-induced misclassifications are usually not evenly distributed. Moreover, misclassifying different classes often have different risk implications for safety-critical systems. Therefore, for any possible checker DNN, we need to consider the risk impact and apply fault simulation to evaluate its effectiveness. This is a time-consuming process. Therefore, the critical challenge is how to efficiently explore the solution space of all possible checker DNN designs to find 
the one with an optimized risk/overhead trade-off. 

\section{DeepDyve Overview}
\label{sec:overview}

Figure~\ref{fig:overview} (a) depicts the proposed DeepDyve architecture. It contains three parts: the original network, the checker DNN, and the comparator. The checker DNN is a smaller and simpler DNN model which approximates the original network and the original task. An input instance is processed by both of the two models. Their outputs are checked by the comparator and accepted if consistent. Otherwise, the input instance is subject to a re-computation by the original model, and the new prediction is accepted, regardless of whether the two model outputs are consistent or not.

\noindent
\textbf{Design Goals.}
We first formally define the evaluation metrics (e.g. coverage and overhead).
\label{subsec:metrics}
We evaluate the cost of checker DNN design by the introduced overhead. The overhead of the checker DNNs can be calculated as follows:
\begin{equation}\small
    O(S) = \frac{params_{small}}{params_{big}},
    \label{equ:overheads}
\end{equation}
\begin{equation}\small
    O(C) = \frac{FLOP(net_{small}) + (1-P_{consistent}) \times FLOP(net_{big})}{FLOP(net_{big})},
\label{equ:overheadc}
\end{equation}

$O(S)$ and $O(C)$ stand for the storage overhead and computational overhead, respectively, wherein $params$ stands for the storage requirement of model parameters with unit of Mega Bytes ($MB$), and $FLOP$ function calculates the number of multiply-accumulation operations in the network. The computational overhead contains two parts: FLOP of the small network (static overhead) and the re-computation overhead (dynamic overhead) when the small network output is different from that of the big one (with probability $ 1- P_{consistent}$). 

The detection ability of DeepDyve is characterized by the coverage rate. A popular definition of fault coverage would be number of classification failures detected among all mis-classifications caused by faults, as show in in Equation~\ref{equ:coverage}. $DF_{i,j}$ stands for the detected failures mis-classified from class $i$ to class $j$, and $TF_{i,j}$ denotes the total failures from class $i$ to class $j$ when faults occur. To take the different risk impact of different failures on safety-critical application into consideration, we introduce a new metric called \emph{weighted coverage}, abbreviated as $WCov.$ in Equation~\ref{equ:weightedCoverage}. $I_{i,j}$ is the risk impact if class $i$ is mis-classified into class $j$, which will be defined later in this Section. Note that $Cov.$ is a special case of $WCov.$ when all misclassifications have the same risk impact. In later text, we use coverage and weighted coverage interchangeably and they both refer to weighted coverage if not specified. 
\begin{equation}\small
    Cov. = \frac{\sum_{ij} DF_{ij}}{\sum_{ij}TF_{ij}}.
    \label{equ:coverage}
\end{equation}

\begin{equation}\small
    WCov. = \frac{\sum_{i j} DF_{ij} \times I_{i j}}{\sum_{ij} TF_{ij} \times I_{ij} }, \forall i, j \in N \ \text{and}\  i \neq j.
    \label{equ:weightedCoverage}
\end{equation}

\noindent
\textbf{Design Stages.}
Under the guidance of the design goals, there are mainly two stages in designing of the checker DNN and we show them in Figure~\ref{fig:overview} (b). The first stage is architecture exploration, where we initialize the architecture of the checker DNN. Given a task model, a pool of checker DNN candidates with the same task of the given model are generated with model compression techniques. Then, one of them is picked from the pool by evaluating their overhead and fault coverage, detailed in Section~\ref{sec:archi}. The second stage is task exploration, where we try to manipulate the classification tasks performed by the checker DNN to achieve better coverage/overhead trade-off. That is, we can find a better solution by providing more design options at the task level, detailed in Section~\ref{sec:task}.

To solve the above design exploration problems, we define the following three matrices:

\begin{figure}
    \centering
    \includegraphics[width=0.75\linewidth]{./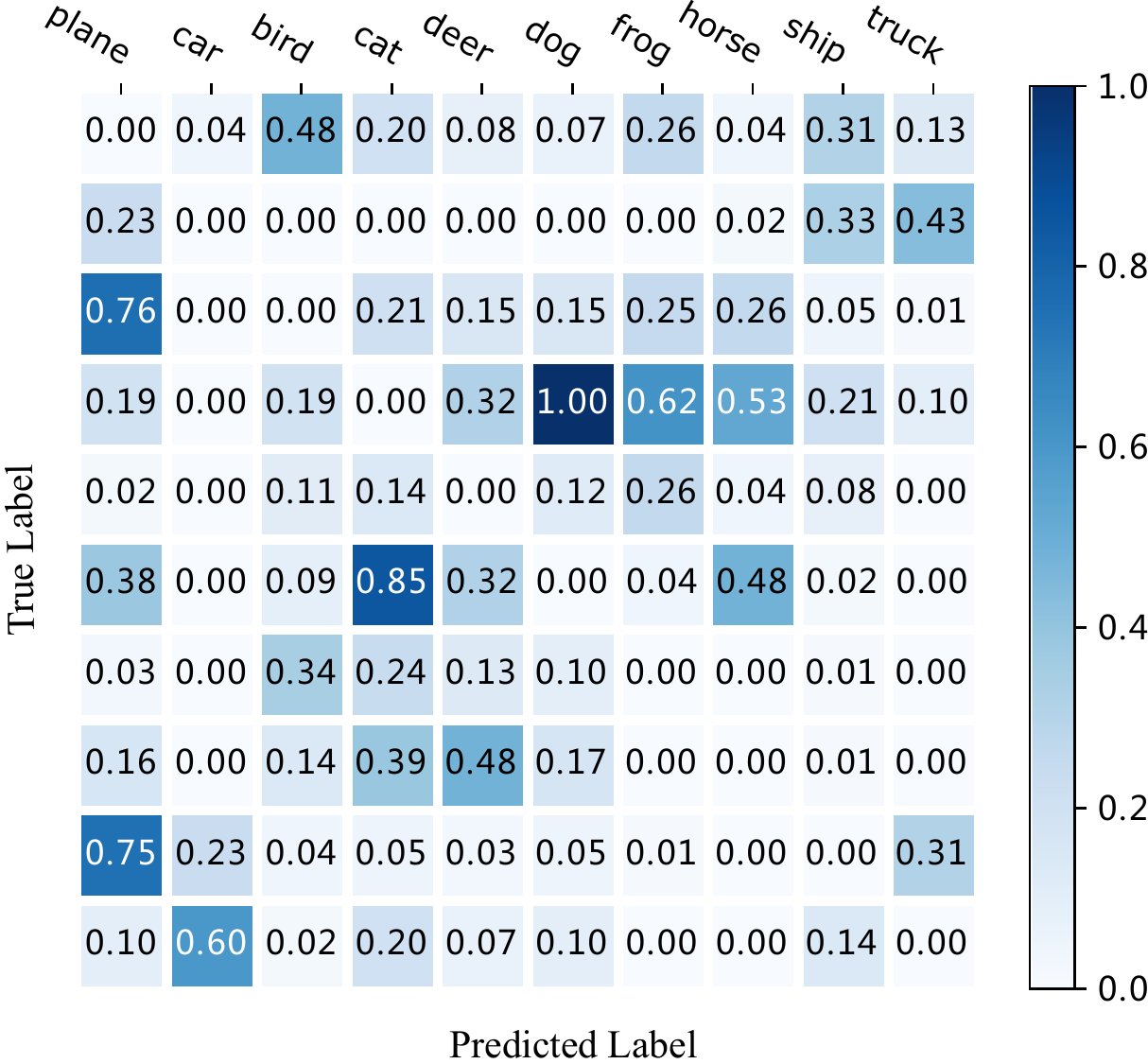}
    \caption{An example of risk probability matrix obtained from CIFAR-10 (normalized for visualization).}
    \label{fig:vulMatrix}
    \vspace{-10pt}
\end{figure}

\begin{itemize}
    \item 
    \textit{Risk impact matrix} $\mathbf{I} \in \mathbb{R}^{\mathbf{N}\times\mathbf{N}}$. In safety-critical DNN applications, the risk impact of different misclassifications may vary significantly from the system perspective. Each entry in  $\mathbf{I}$ denotes the cost of the corresponding misclassification (the larger the value, the higher the cost). 
    As the actual risk impact depends on the application, the values in the impact matrix should be carefully determined by system designers. 
    \item \textit{Risk probability matrix} $\mathbf{R} \in \mathbb{R}^{\mathbf{N}\times\mathbf{N}}$, where the entry $R_{ij}$ denotes the probability that the $i$-th class is misclassified to $j$-th class when faults occur, and $N$ denotes the total number of classes. Risk probability matrix is obtained from fault injection experiments. Figure~\ref{fig:vulMatrix} shows an example of $\mathbf{R}$ drawn from CIFAR-10 dataset by performing random fault injection on VGG-16 for 400,000 times. 
    From this example, we can observe fault-induced misclassifications are far from evenly distributed.  
    \item
    \textit{Inconsistency matrix} $\mathbf{C} \in \mathbb{R}^{\mathbf{N}\times\mathbf{N}}$, wherein each entry $C_{ij}$ denotes the probability that one sample is labeled as $i$-th class by the task model while is labeled as $j$-th class by the small model in DeepDyve. Note that entry $C_{ii}, i\in {[1..N]}$ equals to zero.
    In practice, some classes are naturally more difficult to classify (e.g., \textit{dog} and \textit{cat} in the CIFAR-10 dataset) than others. These difficult classes cause more inconsistency than the easy ones. Combining them as one class (e.g., as \textit{pet}) in the checker DNN is relatively easy to achieve high consistency. 
\end{itemize}

Risk impact matrix $\mathbf{I}$ will be used in calculating weighted coverage. Besides, the three matrices $\mathbf{I}$, $\mathbf{R}$ and $\mathbf{C}$ will be all used in task exploration, wherein we try to combine those classes that are easily confused yet have less risk for task simplification. 



\section{Architecture Exploration}
\label{sec:archi}
The objective of the architecture exploration procedure is to find a initial checker DNN model that achieves good fault coverage with low overhead defined in Section~\ref{subsec:metrics}. 
To this end, firstly, we generate a pool of checker DNN candidates. The generation process is trying to minimize the overhead with the help of model compression techniques proposed in~\cite{howard2017mobilenets}, detailed in Section~\ref{subsec:candidate}.
Second, as different candidates offer different trade-offs between overheads and coverage, we illustrate how to efficiently search for an appropriate checker DNN design from the candidates in Section~\ref{subsec:searchArchi}. 

\subsection{Checker DNN Candidate Generation}
In DeepDyve, the \textit{consistency} between predictions of the checker DNN and those of the original DNN decides the computational overhead $O(C)$. Consider an input that is mis-classified by the original DNN when no faults occur, we would like to have the checker DNN output the same wrong label, so that DeepDyve does not flag a nonexistent failure, avoiding unnecessary re-computation. 

To improve consistency, given the task DNN model, we use model compression to generate the checker model candidates. Specifically, we use two types of model compression techniques. First, we use architecture compression to search for the potential architectures and then we use knowledge distillation to train our checker DNN. 
\label{subsec:candidate}

\vspace{5pt}
\textbf{Architecture Compression.}
No doubt to say, the amount of available design choices has a significant impact on any design exploration problem. In order to increase the design options for DeepDyve, we adopt the model compression approach in~\cite{howard2017mobilenets} to make the size of checker DNNs adjustable. To be specific, given the task DNN architecture, we use a single width multiplier $\alpha$ to adjust it, by uniformly scaling down the number of channels (or neurons if it is a linear layer) for each layer. For example, a feature map with 100 channels will be scaled down to the one with ten channels with $\alpha$ being set to 0.1. By applying width multiplier, the resulting model architecture has much less overhead. 


\begin{table}[t]
\caption{ResNet-10 with different width multiplier.}
\label{tab:width}
\centering
\begin{tabular}{|c|c|c|c|}
\hline
$\alpha$ & Accuracy(\%) & O(S) (MB) & O(C) (GFLOPs) \\ \hline
1.0   & 97.54        & 1.23      & 0.06         \\ \hline
0.7   & 96.94        & 0.60      & 0.03         \\ \hline
0.5   & 96.20        & 0.31      & 0.02         \\ \hline
0.3   & 95.75        & 0.12      & 0.01         \\ \hline
\end{tabular}
\end{table}

We take one of popular architectures---ResNet trained on GTSTB~\cite{Stallkamp2012} as a case study to show the effect of width multiplier. Table~\ref{tab:width} lists the accuracy, the storage overhead (in MegaByte) and the computational overhead (in Giga Floating Point Operations) of ResNet-10 with different width multipliers. The first row stands for the original ResNet-10. As can be observed, accuracy drops smoothly with smaller model size and less computational cost. 

\vspace{5pt}
\textbf{Parameter Training.}
\label{subsec:training}
To further improve the consistency between the task DNN and the checker DNN, we use \textit{knowledge distillation} to train the checker DNN. Knowledge distillation, formulated by Hinton \textit{et al.}~\cite{hinton2015distilling}, is a training solution to distill a task model (teacher model) and transfer knowledge to a simpler model (student model). 

In our training, the first step of knowledge distillation from the task DNN is to covert the pre-softmax logits, $z_i$, computed for each class into a probability, $p_i$, by Equation~\ref{equ:kd} with the \textit{temperature} $T$. 
\begin{equation}\small
    p_i=\frac{exp(z_i/T)}{\sum_j exp(z_j/T)}
    \label{equ:kd}
\end{equation}
With higher temperature, the new targets for the checker DNN to learn are `softer' probability distributions over classes. 

Next, the checker DNN is trained by minimizing the knowledge distillation loss ($L_{KD}$), which is defined as:
\begin{multline}\small
    L_{KD}=\lambda T^2 \times CrossEntropy(P_{C}^{T}, P_{O}^{T}) 
    \\ + (1-\lambda)\times CrossEntropy(P_C, y_{true}),
    \label{equ:KDLoss}
\end{multline}
wherein $P_{C}^{T}$ and $P_{O}^{T}$ are the softened outputs of the checker DNN and the original DNN under the same temperature $T$. The first component of $L_{KD}$ forces the checker DNN towards approximating similar output distribution of the original DNN (i.e., consistency), whereas the second component of $L_{KD}$ forces the checker DNN towards correctly classifying inputs as usual (i.e., accuracy). We use $\lambda$ to tune the weighted average between the kinds of losses.

\begin{figure*}[t]
    \centering
    \includegraphics[width=1\linewidth]{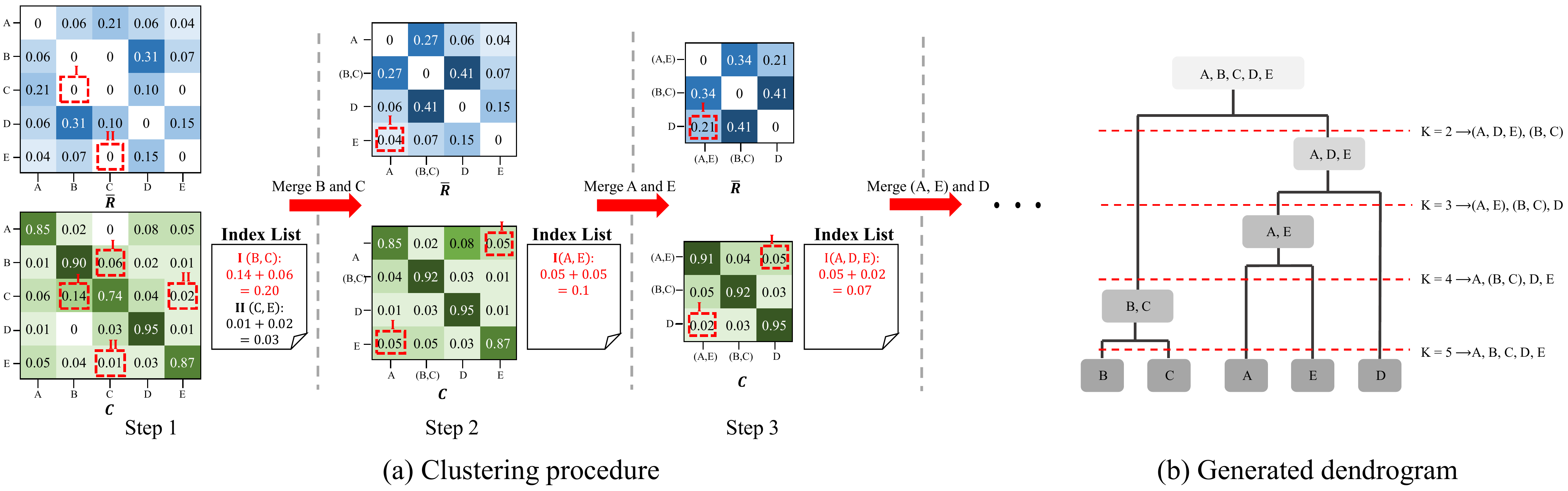}
    \caption{An illustrative example of agglomerative class clustering.}
    \label{fig:algExample}
\end{figure*}

\subsection{Search Strategy}
\label{subsec:searchArchi}

Our search strategy is based on the empirical observation that the consistency between the two models in DeepDyve is related to the multiplier $\alpha$ used to generate the small checker DNN ($\alpha<<1$). 
Here we formally define the \textit{consistency function} of $\alpha$ by:

\begin{mydef}
     Let $g_{t}(\mathbf{x})$ and $g_{\alpha}(\mathbf{x})$ be the task DNN model and the checker DNN model generated from the task DNN with a multiplier $\alpha$. Each model $g$ takes a vector of input $\mathbf{x}$ and outputs a class of $y \in {(0..N-1)}$, where $N$ is the number of classes. We define the consistency function as the probability that the outputs are consistent over the input space $\mathcal{X}$.
    \begin{equation}
    f(\alpha) = P (g_{t}(\mathbf{x}) = g_{\alpha}(\mathbf{x})), \mathbf{x} \in \mathcal{X}
    \end{equation}
\end{mydef}

We choose a simple function with the form of $f(\alpha) = -\frac{a}{\alpha} + b$ to approximate the consistency function. 
In this function, $a$ and $b$ are positive parameters whose values can be obtained via curve fitting after collecting a number of checker DNN models with different $\alpha$ and the corresponding consistency values. Besides, because $0 \leq f(\alpha) \leq 1$ and $ 0 < \alpha \leq 1$, we can obtain the valid range of $\alpha$, which is $\frac{a}{b} \leq \alpha \leq 1$.


Once $a$ and $b$ are known, we have the following theorem: 

    

\begin{theorem}\label{theorem:optimal}
Suppose $f(\alpha)$ is approximated with $-\frac{a}{\alpha} + b$, when $a>0$, $b>0$, and $\frac{a}{b} \leq \alpha \leq 1$. We can find an optimal $\alpha = \sqrt[3]{\frac{a}{2}} $ where the computational overhead $O(C)$ is minimized.
\end{theorem}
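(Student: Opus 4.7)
The plan is to write the overhead $O(C)$ as an explicit one-variable function of $\alpha$ using two ingredients: the scaling behavior of FLOPs under the width multiplier, and the approximated consistency function $f(\alpha)=-a/\alpha+b$. Once $O(C)$ is a clean function of $\alpha$ alone, minimization reduces to a one-line calculus exercise.

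First I would handle the FLOP ratio. The width multiplier $\alpha$ scales the number of input channels and output channels of every convolutional (or fully-connected) layer by $\alpha$, and the FLOP count of each such layer is proportional to the product of input and output channel counts. Hence
\begin{equation}
FLOP(net_{small}) \;=\; \alpha^2 \cdot FLOP(net_{big}).
\end{equation}
Substituting this, together with $P_{consistent}=f(\alpha)=-a/\alpha+b$, into Equation~\ref{equ:overheadc} gives
\begin{equation}
O(C) \;=\; \alpha^2 \;+\; 1 - f(\alpha) \;=\; \alpha^2 + \frac{a}{\alpha} + (1-b).
\end{equation}

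Next I would minimize this one-variable function on the admissible interval $[a/b,\,1]$ derived before the theorem. Differentiating,
\begin{equation}
\frac{d\,O(C)}{d\alpha} \;=\; 2\alpha - \frac{a}{\alpha^2},
\end{equation}
and setting the derivative to zero yields $2\alpha^3 = a$, i.e. $\alpha^{\star} = \sqrt[3]{a/2}$. The second derivative is
\begin{equation}
\frac{d^{2}\,O(C)}{d\alpha^{2}} \;=\; 2 + \frac{2a}{\alpha^{3}} \;>\; 0 \quad \text{for } \alpha>0,
\end{equation}
so $O(C)$ is strictly convex on $(0,1]$ and $\alpha^{\star}$ is the unique global minimizer on any interval that contains it. The remaining bookkeeping is to verify that $\alpha^{\star}$ lies inside the feasible range $[a/b,1]$; this follows because $a,b>0$ together with the assumption that $f$ is a meaningful probability on the stated interval force $a/b\leq \sqrt[3]{a/2}\leq 1$, and if the parameters fitted from data ever push $\alpha^{\star}$ outside $[a/b,1]$, convexity implies the minimum is attained at the nearest endpoint, which would be flagged in practice.

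The main obstacle is really the FLOP-scaling step: the clean $\alpha^{2}$ identity is exact only for convolutional/linear layers whose cost is bilinear in input and output channel counts, and it ignores lower-order costs such as biases, batch-norm, and activation functions. I would therefore state it as the standard modeling assumption used in~\cite{howard2017mobilenets} and note that any sub-quadratic correction does not alter the qualitative conclusion: $O(C)$ remains a strictly convex sum of an increasing polynomial in $\alpha$ and a term proportional to $1/\alpha$, so the first-order condition still yields the cube-root form (with possibly a slightly different constant). After this modeling step is accepted, the rest of the proof is a short calculus argument.
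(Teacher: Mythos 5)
Your proposal is correct and follows essentially the same route as the paper: derive the $\alpha^2$ FLOP scaling for linear/convolutional layers, substitute $P_{consistent}=f(\alpha)$ into Equation~\ref{equ:overheadc} to get $O(C)=\alpha^2+\frac{a}{\alpha}+(1-b)$, and set the derivative $2\alpha-\frac{a}{\alpha^2}$ to zero. Your additions---the second-derivative convexity check confirming the stationary point is a minimum, and the observation that $\alpha^{\star}$ must be verified to lie in $[a/b,1]$ (falling back to an endpoint otherwise)---are points the paper's proof omits, and they strengthen rather than change the argument.
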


\begin{proof}
First, for a neural network composed of linear and convolutional layers, the FLOPs of the checker DNN with a multiplier $\alpha$ is $\alpha ^2$ times the original FLOPs.
Recall that the number of floating point operations (FLOPs) for one linear layer can be estimated by:
\begin{equation}
   2 \times I \times O
\end{equation}
where $I$ and $O$ are number of input and output neurons in one linear layer, respectively. Therefore, the FLOP of a compressed linear layer with multiplier $\alpha$ is:
\begin{equation}
    \alpha ^2 \times (2 \times I \times O)
\end{equation}

Similarly, for a convolutional layer, the floating point operations with multiplier $\alpha$ is estimated by:
\begin{equation}
    (2 \times k^2 \times C_{in}) \times (H_{out} \times W_{out} \times C_{out})
\end{equation}
where $k$ stands for the kernel size, and $C_{in}$, $C_{out}$ stands for number of input and output channels, respectively. $H_{out}$ and  $W_{out}$ are the height and the width of the output tensors. Given this, the FLOP of a compressed convolutional layer with multiplier $\alpha$ is:  
\begin{equation}
    \alpha ^2 \times (2 \times k^2 \times  C_{in}) \times (H_{out} \times W_{out} \times C_{out}).
\end{equation}
Hence, if we add all layers together, the final FLOPs of the checker DNN will be $\alpha ^2$ times of the task model where $\alpha = 1$.

Providing this, the computational overhead of DeepDyve with the checker DNN can be simplified from Equation~\ref{equ:overheadc} to Equation~\ref{equ:simpleoc}.
\begin{equation}
    O(C) = \alpha ^2 + (1- f(\alpha)), \alpha \in (0, 1]
    \label{equ:simpleoc}
\end{equation}

\begin{equation}
    O(C) = \alpha ^2 + (1+\frac{a}{\alpha} - b), \alpha \in (0, 1], a > 0, b > 0, 
\end{equation}

\begin{equation}
    \nabla O(C) = 2 \alpha  -  \frac{a}{\alpha ^2}, \alpha \in (0, 1], a > 0
    \label{equ:ocgradient}
\end{equation}


\begin{equation}
    \text{Let} \ \nabla O(C) = 0, \text{then} \ \alpha = \sqrt[3]{\frac{a}{2}}
    \label{equ:optimal}
\end{equation}

To find the optimal point, we calculate the gradient of $O(C)$ as Equation~\ref{equ:ocgradient}. By letting the gradient equals to 0, we obtain the optimal point of $\alpha$, which is $\sqrt[3]{\frac{a}{2}}$, as shown in Equation~\ref{equ:optimal}.
\end{proof}

Therefore, to obtain the optimal $\alpha$, we are going to fit the consistency function $-\frac{a}{\alpha} + b$ with the given candidate pool. After that, we select the checker DNN with $\alpha = \sqrt[3]{\frac{a}{2}}$. 



\section{Task Exploration}
\label{sec:task}

After the initial checker architecture is fixed, DeepDyve performs task exploration to achieve better risk/overhead trade-off. In Section~\ref{subsec:simplification}, we first discuss how to perform task simplification efficiently under the guidance of risk probability matrix $\mathbf{R}$, risk impact matrix $\mathbf{I}$ and inconsistency matrix $\mathbf{C}$. The first step generates a bunch of different tasks. Then, we detail the search strategy to select the best task in Section~\ref{subsec:searchTask}.


\begin{algorithm}
\caption{Agglomerative class clustering}
\label{algo:cluster}
\KwIn{Risk matrix $\overline{\mathbf{R}}$, 
inconsistency matrix $\mathbf{C}$, No. of classes $N$,
class labels $L$}
\KwOut{$(N-2)$ cluster label lists}
\tcc{Initialize}
$k = N-1 $\;
$candidateList = []$\;
\For{$q=1, 2, \ldots, N$}
{
    $G_q = l_q$\tcp*{\footnotesize Cluster with single class}
    $\lambda_q = q$\tcp*{\footnotesize Initialize cluster label}
}
\While{$k \geq 2$}{
\tcc{\footnotesize Select clusters based on two criteria}
$indexList =$ all $\mathop{\arg \min}_{i,j}COV_{loss}(i,j)$ in $\overline{\mathbf{R}}$\;
$(n, m) = \mathop{\arg \max}_{i,j} O_{save}(i,j)$ in $indexList$\;
\tcc{\footnotesize Update cluster label list, matrices}
Merge $G_n$ and $G_m$, Update clusters $\{G\}$ and $\boldsymbol{\lambda}$\;
Update $\overline{\mathbf{R}}$, $\mathbf{C}$\;
\tcc{\footnotesize Add cluster label list to candidates}
$candidateList[k] = \boldsymbol{\lambda}$\;
$k = k -1$\;
}

\Return $candidateList$;
\end{algorithm}

\subsection{Agglomerative Class Clustering}
\label{subsec:simplification}
Given the original $N$-class task, our problem is to find a simplified $K$-class task for any given checker DNN with better overhead/coverage trade-off. We consider it as a clustering problem and propose the \textit{Agglomerative Class Clustering} to solve it (see Algorithm~\ref{algo:cluster}). 

Formally, we assume the labels of original $N$ classes as:
$L=\{l_1, l_2, \ldots, l_N\}$. For the sake of simplicity, we can map the labels into integer numbers, as $L=\{1, 2, \ldots, N\}$. They are to be grouped into $K$ clusters $\{G_k \vert k = 1, 2, \ldots, K\}$, where
$G_{k^{\prime}}\bigcap_{k^{\prime}\neq k} G_{k}=\varnothing$ and $L = \bigcup^{K}_{k=1}G_k$. Accordingly, we can use $\lambda_i\in\{1, 2, \cdots, K\}$ to represent the cluster label of original label $l_i$. Then, the clustering result can be represented by a cluster label list: $\boldsymbol{\lambda}=(\lambda_1, \lambda_2, \cdots, \lambda_N)$.


\noindent
\textbf{Risk matrix $\overline{\mathbf{R}}$.}
The risk probability matrix and risk impact matrix can be integrated into one single risk matrix with an element-wise multiplication:
\begin{equation}
\overline{\mathbf{R}} = \mathbf{R}\odot \mathbf{I},
\end{equation}
wherein each entry in $\overline{\mathbf{R}}$ stands for the risk between classes of the big network. 

\noindent
\textbf{Two clustering criteria.} Merging two classes into one cluster have two effects:
\begin{itemize}
    \item Coverage loss, since fault-induced misclassfications between the two classes cannot be detected any more. Hence, we prefer merging classes with small values in risk matrix $\overline{\mathbf{R}}$.
    \item Overhead savings, because the simplified task is easy to learn and it will be more consistent with the big DNN, thereby reducing re-computational overhead. 
\end{itemize}

We use $COV_{loss}$ and $O_{save}$ to denote such effects, which are used in Algorithm 1. 


\noindent
\textbf{Updating $\overline{\mathbf{R}}$ and $\mathbf{C}$.} After selecting two classes or clusters to merge, we should update the $\overline{\mathbf{R}}$ and $\mathbf{C}$ accordingly. Assuming cluster $G_i$ and $G_j$ ($i < j$) are to be merged, we first move classes in cluster $G_j$ to $G_i$, as $G_i = G_i \bigcup G_j$, delete $G_j$ and re-assign the cluster label for the rest of clusters. Then we update the risk  and inconsistency values of $G_i$ ($i$-th row and $i$-th column of $\overline{\mathbf{R}}$ and $\mathbf{C}$) as the sum of the corresponding values of two clusters. Lastly, we delete the $j$-th row and $j$-th column in $\overline{\mathbf{R}}$ and $\mathbf{C}$. In this way, we aggregate the risk probability and inconsistency values of two merged clusters while preserving the property of the matrices defined in Section~\ref{sec:overview}.

\noindent
\textbf{Clustering scheme.} We apply a hierarchical clustering algorithm---\textit{agglomerative class clustering}, which is illustrated in Algorithm~\ref{algo:cluster}. Specifically, each class of the original task starts in its own cluster. Then, we search for two clusters with the smallest coverage loss in $\overline{\mathbf{R}}$. Note that $\overline{\mathbf{R}}$ (see Figure~\ref{fig:vulMatrix}) is usually sparse, in case of multiple occurrences of the minimum value, we choose the one with the largest overhead savings from $\mathbf{C}$, which in turn improves the model consistency between the task model and the checker DNN. Next, we merge the two selected clusters into one cluster and update $\overline{\mathbf{R}}$ and $\mathbf{C}$. The above procedure iterates in a bottom-up manner until all classes of the original task are merged as a single cluster. Consequently, the clustering results can be presented in a dendrogram with $K=2$ to $K=N-1$ clustering candidates, which enables later exploration for the optimal simplified task for a given checker DNN. Figure~\ref{fig:algExample} shows an example of the iterative clustering procedure with five classes and the generated dendrogram.


\subsection{Search Strategy}
\label{subsec:searchTask}


After obtaining candidate tasks from Algorithm 1, we
evaluate the corresponding overhead savings and coverage loss with fault injection experiments. As the number of candidate tasks $N-1$ increases linearly with the number of original classes, we can evaluate them efficiently.  Afterwards, we have a list of pareto-optimal checker DNN designs with various coverage/overhead trade-offs. We then choose the optimal final DNN design.



\begin{figure*}[t]
    \centering
    \includegraphics[width=0.8\linewidth]{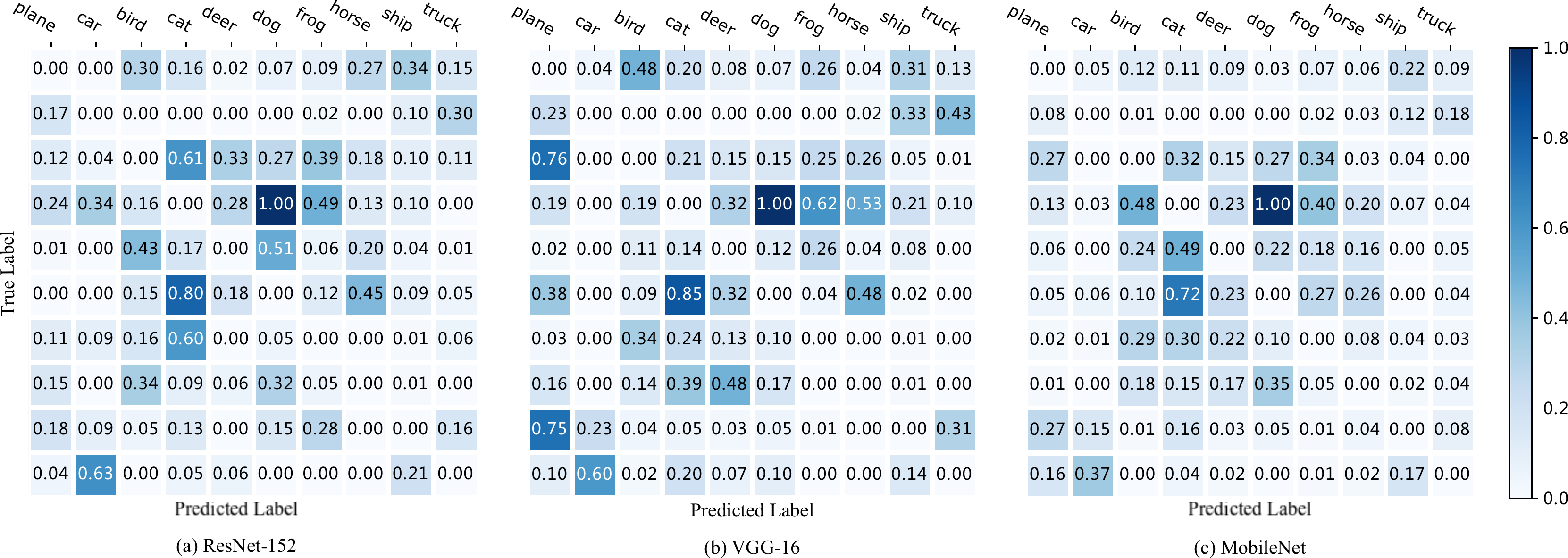}
    \caption{Risk probability matrices of different models under four million fault injections.}
    \label{fig:vultimes}
\end{figure*}

\section{Experimental Results}
\label{sec:experiment}

In this section, we demonstrate the effectiveness of DeepDyve. First, we present our experimental setup in Section \ref{subsec:setup}.
After that, our results that architecture exploration facilitates to find an optimized checker DNN architecture in Section \ref{subsec:effectArchi}, and task simplification further saves the overhead of DeepDyve in Section \ref{subsec:effectTask}, respectively. We show DeepDyve outperforms existing solutions in Section \ref{subsec:comparisonthresh}. At last, in Section \ref{subsec:casestudy}, we discuss the impact of model accuracy through a case study on CIFAR-100.

\subsection{Setup}
\label{subsec:setup}
\textbf{Datasets.}
We demonstrate the effectiveness of DeepDyve on four widely used image classification datasets: CIFAR-10~\cite{krizhevsky2009learning}, The German Traffic Sign Recognition Benchmark (GTSRB)~\cite{Stallkamp2012}, CIFAR-100~\cite{krizhevsky2009learning}, and Tiny-ImageNet \cite{le2015tiny}. CIFAR-10 and CIFAR-100 datasets contain 50,000 training images and 10,000 test images, and they have 10 and 100 classes, respectively. GTSRB has 43 classes of different traffic signs. It has 39,209 training images and 12,630 test images in total. 
Tiny-ImageNet is a 200-class natural image dataset sub-sampled from ImageNet dataset and it contains 100,000 training and 10,000 validation images.

\vspace{5pt}
\noindent
\textbf{Models.}
Table~\ref{table:model} shows the task DNN used by each dataset. For CIFAR-10, the task DNN model is a ResNet-152 with an accuracy of 95.16\%. For GTSRB, the task DNN we use is a ResNet-34 model with an accuracy of 98.6\%. 
The ResNet-152 for CIFAR-100 has an accuracy of 80.11\%. The task model for Tiny-ImageNet is WideResNet-101, and its accuracy is 85.20\%. Please note that for Tiny-ImageNet, we use pre-trained weights on the ImageNet dataset and fine-tune them to obtain high accuracy.
We use Pytorch profiling tool "thop"\footnote{https://github.com/Lyken17/pytorch-OpCounter} to quantify model GFLOPs and parameters.
We quantize all DNN parameters into 8-bit integers (INT-8) following a uniform affine quantitizer~\cite{krishnamoorthi2018quantizing}.

\vspace{5pt}
\noindent
\textbf{Fault Model.}
In our experiment, we use two types of fault injection: random fault injection and BitFlip Attack (BFA) \cite{rakin2019bit}. For random fault injection, in each simulation run, we randomly flip $n$ bits in the model and pass one randomly selected image to the DNN model for inference. 
BFA proposed in~\cite{rakin2019bit} is the state-of-art fault injection attack on DNN models. It can crash the DNN system by injecting a small number of bit-flips by searching the most vulnerable bit progressively. A failure occurs when the predicted label is different from the one obtained in the fault-free case. 

\begin{table}[t]
\caption{Datasets and Task Models.}
\resizebox{\columnwidth}{!}{%

\begin{tabular}{|c|c|c|c|c|c|}
\hline
Dataset         & \#Classes  & Task Model     & Accuracy       & FLOPs  & Parameter \\ \hline
CIFAR-10        & 10        & ResNet-152    & 96.15\%         & 3.75 G   & 58.22 M          \\ \hline
GTSRB           & 43        & ResNet-34     & 99.40\%          & 1.16 G   & 21.30 M         \\ \hline
CIFAR-100       & 100       & ResNet-152    & 80.11\%         & 3.75 G   & 58.22 M        \\ \hline
Tiny-ImageNet   & 200       & WideResNet-101  & 85.20\%         & 22.84 G   & 126.89 M        \\ \hline
\end{tabular}

}
\label{table:model}
\end{table}

\vspace{5pt}
\noindent
\textbf{Risk Impact Martrix.}
In practice, the risk impact matrix should be determined by system designers after conducting application-specific risk analysis. One practical solution would be categorizing the risk impact into a few risk levels and filing the matrix accordingly. 
In our experiments. we simulate the impact matrix with two configurations.
\begin{itemize}
    \item \textbf{Uniform Impact}, where all entries are ones. It represents that the risk impact among all classes are equal. When the risks of different classes do not have significant differences, the uniform impact matrix can be used for simplicity.
    \item \textbf{Non-uniform Impact}, where the risk impact values are set to two different levels. 
    As classes with the low precision are not trustworthy themselves and hence have low risk, in this configuration, we assign those classes with the lowest 25\% precision with impact 1 and the others with 100.
\end{itemize}
    


\noindent
\textbf{Risk Probability Matrix.}
We obtain the risk probability matrix and the failure coverage of the checker DNN with random fault injection experiments. Figure~\ref{fig:vultimes} shows the risk probability matrices of different state-of-art model architectures trained on CIFAR-10.
We perform 4 million fault injections to obtain the result in this figure\footnote{More fault injections are performed, and the results are similar.}. For visualization purposes, the probability matrix is divided by the maximum element. The sum of all elements in the original probability matrix is $1$.

We can observe that the risk probability matrix is more task-related than model-related. 
The probability distributions are very similar across different DNN models on the same CIFAR-10 task. For example, in Figure~\ref{fig:vultimes} (a), the value between dog and cat is the highest one with fault injections. It is also true for Figure~\ref{fig:vultimes} (b) and (c).
This matrix will be used in the task simplification process.

\subsection{Effectiveness of Architecture Exploration}

\label{subsec:effectArchi}


\begin{figure*}[t]
    \centering
    \includegraphics[width=\linewidth]{./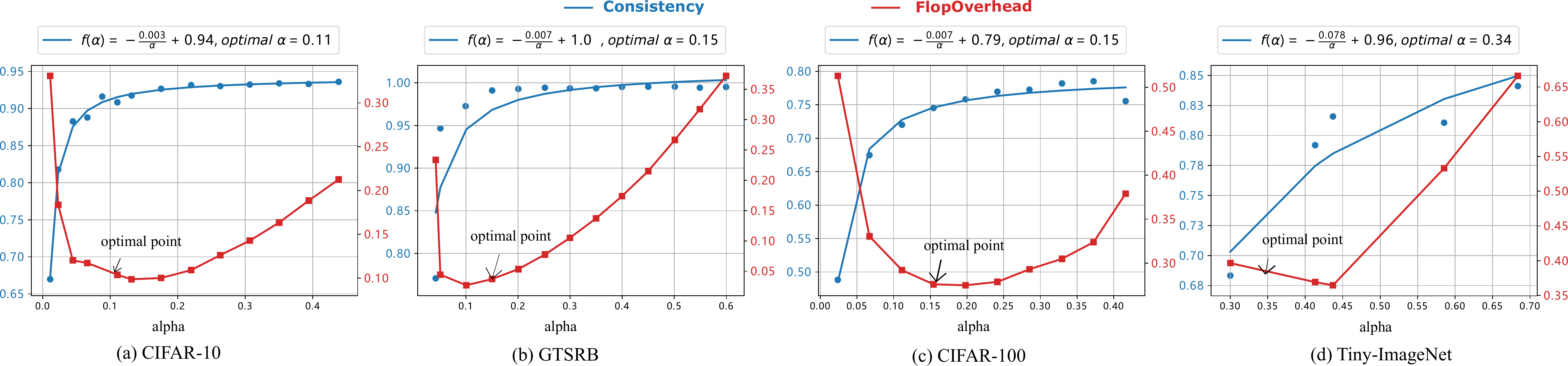}
    \caption{Consistency v.s. overhead in the architecture exploration stage.}
    \label{fig:consistency}
\end{figure*}

In this part, we study the effectiveness of the architecture exploration. 


Figure~\ref{fig:consistency} shows the consistency, indicated by blue points and approximated by blue curve, and the computational overhead, indicated by red points and connected by the red curve, of checker DNN models with different sizes trained by DeepDyve. Please note that we only investigate five checker model sizes for Tiny-ImageNet due to high training effort for this dataset, including the pre-training on ImageNet dataset and fine-tuning on Tiny-Imagenet. For all the four datasets, the consistency between the task and checker models improves with the increasing size of checker DNNs. Also, as we can observe, there is a turning point on the computational overhead curve. Before the turning point, the re-computation dominates the computational overhead O(C), and after which, the checker models' computational cost dominates the O(C).

First, compared to the architecture with the highest consistency (i.e., duplication), the optimized architecture can greatly reduce the overhead. For CIFAR-10, GTSRB, CIFAR-100, and Tiny-ImageNet, 91.82\%, 97.32\%, 72.52\% and 63.09\% overhead can be saved with 8.38\%, 1.25\%, 24.18\% and 20.81\% consistency degradation, respectively (see Table~\ref{table:tasksimplification}). 
We also observe that the consistency values vary for different datasets. 
After manual checking, we found the consistency values of the optimized architecture on the training set are almost 100\%, but it generalizes differently during inference for the four data sets.

Second, the relation between consistencies, architecture sizes, and the optimal point is well captured by THEOREM~\ref{theorem:optimal} .
For example, the resulting consistency function for CIFAR-10 is $f(\alpha) = -\frac{0.003}{\alpha} + 0.94$, and hence the optimal point is when $\alpha = 0.11$ (recall that the optimal point is $\sqrt[3]{\frac{a}{x}}$). This is compatible with the red curve where $\alpha = 0.11$ almost renders the minimal computational overhead. Similarly, the consistency function for GTSRB is $f(\alpha) = -\frac{0.007}{\alpha} + 1.0$ and the calculated optimal point is $\alpha = 0.15$, which is also compatible with the red curve. Therefore, 
the initial checker DNN architecture can be efficiently found by the proposed method.

\subsection{Effectiveness of Task Exploration}

\label{subsec:effectTask}

\begin{table}[]
\caption{Task simplification further shrink the overhead.}
\label{table:tasksimplification}

\resizebox{\columnwidth}{!}{
\begin{tabular}{|c|c|c|c|c|c|c|c|}
\hline
\multirow{2}{*}{Dataset}       & \multirow{2}{*}{Impact Matrix} & \multirow{2}{*}{\begin{tabular}[c]{@{}c@{}}Start\\ Consistency\end{tabular}} & \multicolumn{2}{c|}{Before TaskSim.} & \multicolumn{2}{c|}{After TaskSim} & \multirow{2}{*}{k} \\ \cline{4-7}
                               &                                &                                                                              & O(C)              & Wcov             & O(C)             & Wcov            &                    \\ \hline
\multirow{2}{*}{CIFAR-10}      & non-uniform                    & \multirow{2}{*}{91.62\%}                                                     & 9.18\%            & 86.94\%          & 6.88\%           & 86.12\%         & 8                  \\ \cline{2-2} \cline{4-8} 
                               & uniform                        &                                                                              & 9.18\%            & 75.90\%          & 9.11\%           & 75.90\%         & 9                  \\ \hline
\multirow{2}{*}{GTSRB}         & non-uniform                    & \multirow{2}{*}{98.75\%}                                                     & 2.68\%            & 98.46\%          & 1.94\%           & 98.23\%         & 23                 \\ \cline{2-2} \cline{4-8} 
                               & uniform                        &                                                                              & 2.68\%            & 98.15\%          & 2.56\%           & 98.15\%         & 33                 \\ \hline
\multirow{2}{*}{CIFAR-100}     & non-uniform                    & \multirow{2}{*}{75.82\%}                                                     & 27.48\%           & 67.29\%          & 24.02\%          & 66.92\%         & 83                 \\ \cline{2-2} \cline{4-8} 
                               & uniform                        &                                                                              & 27.48\%           & 74.33\%          & 27.42\%          & 74.33\%         & 99                 \\ \hline
\multirow{2}{*}{Tiny-ImageNet} & non-uniform                    & \multirow{2}{*}{79.19\%}                                                     & 36.91\%           & 76.40\%          & 35.56\%          & 75.19\%         & 186                \\ \cline{2-2} \cline{4-8} 
                               & uniform                        &                                                                              & 36.91\%           & 78.03\%          & 36.84\%          & 78.02\%         & 198                \\ \hline
\end{tabular}
}
\end{table}

\begin{figure*}[t]
    \centering
    \includegraphics[width=\linewidth]{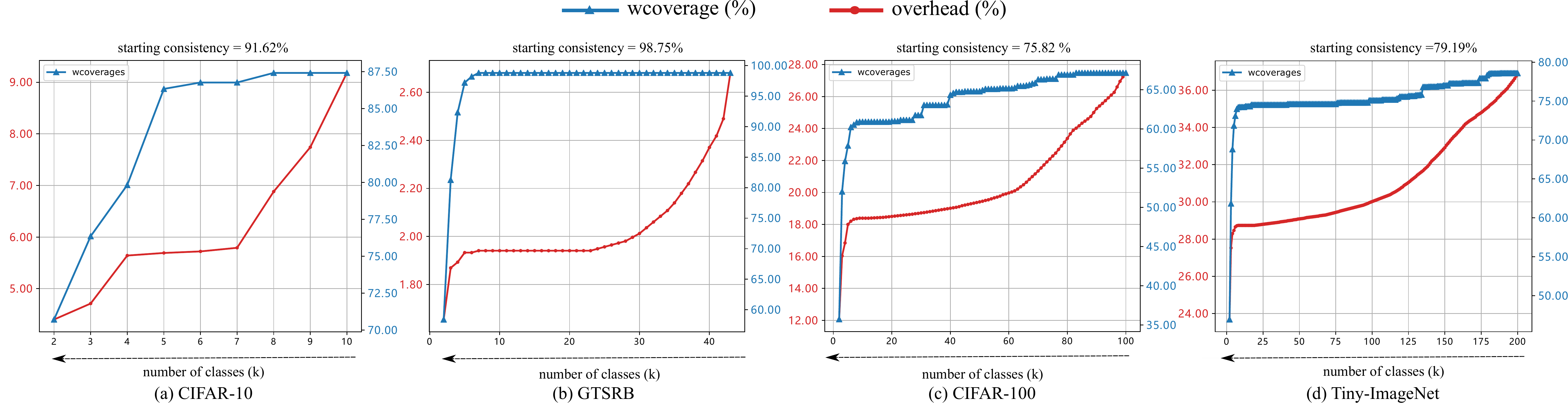}
    \caption{Tracing of the fault coverage and overhead change in task simplification process.}
    \label{fig:tasksim}
\end{figure*}

In this part, we first evaluate the weighted coverage $Wcov.$ and overhead $O(C)$ under various simplified classes $k$. 
Then, we study how the risk impact matrix affects task simplification.

First, we observe that task simplification can significantly reduce the overhead of DeepDyve with little coverage degradation. In Figure~\ref{fig:tasksim}, we use the checker DNN design obtained from Section \ref{subsec:effectArchi} and trace the weighted coverage $Wcov.$ and overhead $O(C)$ during the agglomerative class clustering process. Table~\ref{table:tasksimplification} shows the final task simplification results.
Through task simplification, we can save the overhead by $(9.18\%-6.88\%)/9.18\%=25.05\%$, 27.61\%, 12.60\%, 3.38\% for CIFAR-10, GTSRB, CIFAR-100, and Tiny-ImageNet, at the cost of 0.9\%, 0.2\%, 0.5\%, 1.6\% coverage degradation, respectively. 

\begin{table*}[t]

\caption{Comparison between DeepDyve and threshold checking.}
\label{table:comparison}
\resizebox{2\columnwidth}{!}{%
\begin{tabular}{|c|c|c|c|c|c|c|c|c|c|c|c|c|c|c|c|c|c|}
\hline
\multirow{3}{*}{Dataset}       & \multirow{3}{*}{Impact Matrix} & \multicolumn{8}{c|}{Random Fault Attack}                                                   & \multicolumn{8}{c|}{BFA}                                                                    \\ \cline{3-18} 
                               &                                & \multicolumn{4}{c|}{Threshold Checking}    & \multicolumn{4}{c|}{DeepDyve}                 & \multicolumn{4}{c|}{Threshold Checking}      & \multicolumn{4}{c|}{DeepDyve}                \\ \cline{3-18} 
                               &                                & FPR    & FNR     & O(C) & Wcov.            & FPR    & FNR     & O(C)    & Wcov.            & FPR    & FNR      & O(C) & Wcov.             & FPR    & FNR    & O(C)    & Wcov.            \\ \hline
\multirow{2}{*}{CIFAR-10}      & non-uniform                    & 0.04\% & 96.24\% & -    & \textbf{4.10\%}  & 0.00\% & 43.29\% & 6.88\%  & \textbf{82.98\%} & 0.04\% & 33.45\%  & -    & \textbf{66.39\%}  & 0.00\% & 1.01\% & 6.88\%  & \textbf{98.93\%} \\ \cline{2-18} 
                               & uniform                        & 0.04\% & 90.87\% & -     & \textbf{9.12\%}  & 0.00\% & 24.06\%  & 9.11\%  & \textbf{75.94\%} & 0.04\% & 55.52\%  & -    & \textbf{44.48\%}  & 0.00\% & 2.15\% & 9.11\%  & \textbf{97.85\%} \\ \hline
\multirow{2}{*}{GTSRB}         & non-uniform                    & 0.00\% & 99.72\% & -    & \textbf{0.3\%}   & 0.00\% & 16.53\% & 1.94\%  & \textbf{95.49\%} & 0.00\% & 100.00\% & -    & \textbf{0.00\%}   & 0.00\% & 0.90\% & 1.94\%  & \textbf{99.75\%} \\ \cline{2-18} 
                               & uniform                        & 0.00\% & 96.85\% & -     & \textbf{3.15\%}  & 0.00\% & 5.07\%  & 2.56\%  & \textbf{94.93\%} & 0.00\% & 99.88\%  & -    & \textbf{0.12\%}   & 0.00\% & 0.79\% & 2.56\%  & \textbf{99.21\%} \\ \hline
\multirow{2}{*}{CIFAR-100}     & non-uniform                    & 0.02\% & 65.11\% & -    & \textbf{33.48\%} & 0.00\% & 31.51\% & 24.02\% & \textbf{74.67\%} & 0.02\% & 14.50\%  & -    & \textbf{86.61\%}  & 0.00\% & 1.84\% & 24.02\% & \textbf{99.50\%} \\ \cline{2-18} 
                               & uniform                        & 0.02\% & 86.99\% & -     & \textbf{13.01\%} & 0.00\% & 19.21\% & 27.42\% & \textbf{80.79\%} & 0.02\% & 1.05\%   & -    & \textbf{98.95\%}  & 0.00\% & 0.37\% & 27.42\% & \textbf{99.63\%} \\ \hline
\multirow{2}{*}{Tiny-ImageNet} & non-uniform                    & 0.02\% & 69.31\% & -    & \textbf{30.56\%} & 0.00\% & 19.46\% & 35.19\% & \textbf{82.00\%} & 0.02\% & 0.00\%   & -    & \textbf{100.00\%} & 0.00\% & 0.05\% & 35.19\% & \textbf{99.94\%} \\ \cline{2-18} 
                               & uniform                        & 0.06\% & 87.34\% & -    & \textbf{12.65\%} & 0.00\% & 17.02\% & 36.84\% & \textbf{82.98\%} & 0.06\% & 0.08\%   & -    & \textbf{99.92\%}  & 0.00\% & 0.02\% & 36.84\% & \textbf{99.98\%} \\ \hline
\end{tabular}
}

\end{table*}

We can also observe that fault coverage and overhead vary a lot under different impact matrix configurations. First, the initial weighted coverage values before task simplification is different for uniform and non-uniform settings, because the $I_{ij}$ term in the definition of weighed coverage given by Equation~\ref{equ:weightedCoverage} varies.
Second, we observe much more overhead savings can be achieved in the non-uniform case. For example, the overhead saving can be improved from 0.76\% to 25.05\% on CIFAR-10 when changing to
non-uniform impact matrix. This is because, the impact of classes with low precision is set as lower values under such circumstances, which provides more opportunities for task simplification.  
In other words, a reasonable impact matrix is beneficial for protection with DeepDyve and hence is highly recommended. 

\subsection{DeepDyve vs. Threshold Checking}
\label{subsec:comparisonthresh}

In this section, we compare the performance of DeepDyve with the Threshold Checking scheme proposed in~\cite{li2017understanding}. We experiment under both random fault attack and BitFlip Attack (BFA) settings. We show the results in Table~\ref{table:comparison}, including the false-positive rate (FPR), false-negative rate (FNR), computational overhead (O(C)), and the weighted coverage (Wcov.). 

First, we observe DeepDyve significantly outperforms Threshold Checking in terms of Wcov., which in turn leads to smaller FNR. As discussed in Section~\ref{sec:backgorund}, most intermediate activation values locate in the normal range even under fault attack, especially for the quantized DNN case. Hence, most faults are missed with Threshold Checking, but they can be detected by DeepDyve. 

Second, in most cases, we can observe both DeepDyve and Threshold Checking performs better under BFA compared to random fault attacks. We manually check the internal values of DNN under fault attack and find that the magnitude of value change is larger in BFA than in random fault attacks, thereby making fault detection easier. 

Third, the FPR of Threshold Checking is above zero while that of our DeepDyve system is zero. For example, the FPR of Threshold Checking for CIFAR-10 dataset is 0.04\%. As discussed in Section \ref{sec:backgorund}, threshold Detection sets the thresholds as 1.1 times the maximum and a minimum of each layer's normal activation values on the training set. On the testing set, there are few exceptions where the activation values are beyond this range. In contrast, in the normal execution of DeepDyve, the comparator's false positives, which are caused by inconsistencies between task and checker models, are subject to re-computation, and hence, the system's false positives are guaranteed to be zero.



\subsection{Impact of Model Accuracy}

\label{subsec:casestudy}
Previous experiments suggest that the overhead of DeepDyve for CIFAR-100 and Tiny-ImageNet dataset are quite high, even after task simplification. 
This is because the per-class accuracy of the task model on these two datasets varies and some of them are very low, as shown in Figure~\ref{fig:perclassacc}. In safety-critical applications, a class is deserved to be protected only when its accuracy is high enough. Considering the above, we conduct a case study on CIFAR-100 and let the impact of 75\% of the classes with the lowest precision to be zero. 

\begin{figure}[ht]
    \centering
    \includegraphics[width=0.8\linewidth]{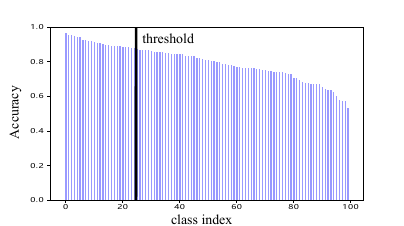}
    \caption{Per-class accuracy for CIFAR-100.}
    \label{fig:perclassacc}
\end{figure}

Previously, the computational overhead induced by the checker DNN before task simplification was 27.48\% (See Table \ref{table:tasksimplification}). With the above setting, as we only care classes with non-zero impact, we let the comparator only check the inconsistencies of these classes. Given this, the overhead induced by the checker DNN before task simplification is 16.17\%. Task simplification can further reduce the overhead from 16.17\% to 9.88\% without loss of weighted coverage (38.89\% overhead savings). Also, the simplified model can reach 90.66\% weighted coverage under random fault attack and 99.78\% weighted coverage under BFA attack. 

\section{Discussions}
\label{sec:discussion}

In this section, we discuss the robustness of the proposed DeepDyve architecture and its limitations. 

\subsection{Robustness of DeepDyve}
\label{sec:checkerFaults}



\noindent
\textbf{What if the checker DNN has faults?}
If the checker DNN is faulty while the task DNN is correct, the final inference accuracy would remain the same, because the system would accept the output from the task model anyway. There could be extra latency. To mitigate this issue, we could leverage various hardening techniques (e.g., secure enclaves) to protect it at a reasonable cost since the checker DNN is much smaller than the task DNN model.

\vspace{5pt}
\noindent
\textbf{Attack on DeepDyve.}
Attackers need to create consistent faulty outputs to bypass the comparison logic of DeepDyve to successful launch their attacks. 

One way to achieve this is to inject faults into the task and checker models simultaneously.
However, the cost of launching such an attack in practice is very high, if not impossible. On the one hand, simultaneously injecting faults at two specific positions is difficult. For example, row-hammer attack relies on the weakness of physical memory row, and it cannot be fully controlled. On the other hand, if low-precision fault injection technique is used to inject random faults into the two DNNs, the probability distribution of the faulty output of a DNN is given by its risk probability matrix, and the probability of two DNNs' outputs happens to be the same is given by
\begin{equation}
    P_{collision} = \sum_i^N{p_i*q_i},
\end{equation}
where $p_i$ and $q_i$ are the probabilities of task DNN and checker DNN generating the same output $i$, respectively, and $N$ is the number of classes. Obviously, this value decreases with the increase of classes. The $P_{collision}$ values are 8.9\%, 2.77\%, 0.87\%, and 0.49\% for CIFAR-10, GTSRB, CIFAR-100, and Tiny-ImageNet, respectively.
Given that the probability of DNNs generating wrong outputs under random faults $P_{error}$ is extremely low. The possibility for such attack to succeed is negligible.

Another way to successfully launch fault injection attacks is to target those inconsistent cases and make them consistent with faulty result. 
To achieve this objective, however, attackers need to be able to tell whether an incoming data is consistent at runtime and perform fault injection before its inference is finished. This is a daunting objective to achieve, especially considering the usual long preparation time for fault injection.


\subsection{Limitations and Future Work}
\label{sec:limitation}

DeepDyve brings latency and reduces throughput due to recomputation, which needs to be considered when performing real-time tasks. For example, there is about 1\% of throughput loss for the above DeepDyve model constructed on the GTSRB dataset. This overhead can be reduced when the consistency rate between the checker DNN and the task DNN models increases. 

Also, our current evaluation is only on the 8-bit integer (INT-8) data representation. One reason is that INT-8 is a popular data type and supported by well known deep learning frameworks and tool-chains, such as Pytorh~\footnote{https://pytorch.org/docs/stable/quantization.html}, Tensorflow\footnote{https://www.tensorflow.org/lite/performance/post\_training\_quantization}, NVIDIA\textregistered TensorRD~\footnote{https://developer.nvidia.com/tensorrt}, and Xilinx\textregistered DNNDK~\footnote{https://www.xilinx.com/products/design-tools/ai-inference/edge-ai-platform.html\#dnndk}. Popular hardware platforms like TPU~\cite{jouppi2017datacenter} support integer operations as well. The second reason is that the Pytorch 1.3 (the one we use) and the code from BFA attack~\footnote{https://github.com/elliothe/BFA} only supports INT-8 quantization in its current version. We shall extend our work to study the impact of different data types in the future.

\section{Related Works}
\label{sec:related}




DNNs have become an enabling technology for many safety-critical
applications, and hence there is a growing research interest for
fault-tolerant DNN designs.

In the literature, some works focused on improving fault-tolerance of DNN systems built with unreliable \textit{resistive random-access memory} (RRAM)~\cite{chen2017accelerator, xia2017fault, liu2017rescuing}, an emerging device with huge energy efficiency benefits. However, this technology is still in its infancy and is not applicable for safety-critical AI applications. Existing fault-tolerant solutions for DNN systems built with conventional CMOS devices can be categorized as \textit{passive} solutions or \textit{active} solutions. 

Passive fault-tolerant solutions reduce the probability of failure occurrences for a certain level at the design stage, including \emph{fault-mitigation} techniques for the DNN design itself and \emph{fault-masking} solutions with redundant circuitries. To be specific, fault-mitigation techniques include: (i) \textit{Resilient training}~\cite{deng2015retraining, kim2018matic, yang2017sram, He2020cvpr}, which explicitly considers hardware faults during the DNN training phase; (ii) \textit{Resilient architecture design}~\cite{schorn2019automated, chiu1993robustness, chu1990fault, dias2010ftset}, which tries to find an error-resilient network architecture or inserts redundancies for those critical elements in the system; (iii) \textit{Device hardening}~\cite{li2017understanding, azizimazreah2018tolerating}, which tries to protect some critical storage elements in the system by selectively hardening them. The above solutions require significant design effort, and they usually cannot provide sufficient fault coverage. Fault-masking techniques are conceptually simple. We could use \textit{error-correcting codes} (ECC) to protect memory elements~\cite{reagen2016minerva} and \textit{triple-modular redundancy} (TMR) to protect computational units~\cite{yan2020single, mahdiani2012relaxed}. However, the corresponding hardware overhead is exceptionally high. 

Active fault-tolerant solutions detect and recover from faults by re-organizing the system in real-time. To achieve active fault-tolerance, we need to be able to detect faults online. There are a few solutions in this direction~\cite{li2019d2nn,li2017understanding,schorn2018efficient}. Among these techniques, \cite{li2017understanding,schorn2018efficient} has small hardware overhead, but their fault detection capabilities are limited, especially for quantized DNNs; the fault coverage of the replication-based solution in~\cite{li2019d2nn} is comparable to DeepDyve, but its overhead is much higher.  


\section{Conclusion}
\label{sec:conclusion}

In this paper, we proposed DeepDyve, a novel dynamic verification technique against fault injection attacks on DNN systems. By introducing a far simpler and smaller checker DNN into the system, DeepDyve significantly outperforms state-of-the-art solutions, reducing 90\% risks at around 10\% computational overhead. 


\begin{acks}
This work is supported in part by \grantsponsor{}{General Research Fund (GRF)}{} of Hong Kong Research Grants Council (RGC) under Grant No. \grantnum[]{}{14205018} and No. \grantnum[]{}{14205420}, and in part by \grantsponsor{}{National Natural Science Foundation of China (NSFC)}{} under Grant No. \grantnum[]{}{61532017}.
\end{acks}

\balance
\bibliographystyle{ACM-Reference-Format}
\bibliography{ref.bib}


\begin{thebibliography}{46}


\ifx \showCODEN    \undefined \def \showCODEN     #1{\unskip}     \fi
\ifx \showDOI      \undefined \def \showDOI       #1{#1}\fi
\ifx \showISBNx    \undefined \def \showISBNx     #1{\unskip}     \fi
\ifx \showISBNxiii \undefined \def \showISBNxiii  #1{\unskip}     \fi
\ifx \showISSN     \undefined \def \showISSN      #1{\unskip}     \fi
\ifx \showLCCN     \undefined \def \showLCCN      #1{\unskip}     \fi
\ifx \shownote     \undefined \def \shownote      #1{#1}          \fi
\ifx \showarticletitle \undefined \def \showarticletitle #1{#1}   \fi
\ifx \showURL      \undefined \def \showURL       {\relax}        \fi
\providecommand\bibfield[2]{#2}
\providecommand\bibinfo[2]{#2}
\providecommand\natexlab[1]{#1}
\providecommand\showeprint[2][]{arXiv:#2}

\bibitem[\protect\citeauthoryear{Austin}{Austin}{1999}]%
        {austin1999diva}
\bibfield{author}{\bibinfo{person}{Todd~M Austin}.}
  \bibinfo{year}{1999}\natexlab{}.
\newblock \showarticletitle{DIVA: A reliable substrate for deep submicron
  microarchitecture design}. In \bibinfo{booktitle}{\emph{Proceedings of the
  32nd Annual ACM/IEEE International Symposium on Microarchitecture (MICRO)}}.
  \bibinfo{publisher}{IEEE}, \bibinfo{pages}{196--207}.
\newblock


\bibitem[\protect\citeauthoryear{Azizimazreah, Gu, Gu, and Chen}{Azizimazreah
  et~al\mbox{.}}{2018}]%
        {azizimazreah2018tolerating}
\bibfield{author}{\bibinfo{person}{Arash Azizimazreah},
  \bibinfo{person}{Yongbin Gu}, \bibinfo{person}{Xiang Gu}, {and}
  \bibinfo{person}{Lizhong Chen}.} \bibinfo{year}{2018}\natexlab{}.
\newblock \showarticletitle{Tolerating soft errors in deep learning
  accelerators with reliable on-chip memory designs}. In
  \bibinfo{booktitle}{\emph{IEEE International Conference on Networking,
  Architecture and Storage (NAS)}}. \bibinfo{publisher}{IEEE},
  \bibinfo{pages}{1--10}.
\newblock


\bibitem[\protect\citeauthoryear{Chen, Li, Chen, Deng, Shen, Liang, and
  Jiang}{Chen et~al\mbox{.}}{2017}]%
        {chen2017accelerator}
\bibfield{author}{\bibinfo{person}{Lerong Chen}, \bibinfo{person}{Jiawen Li},
  \bibinfo{person}{Yiran Chen}, \bibinfo{person}{Qiuping Deng},
  \bibinfo{person}{Jiyuan Shen}, \bibinfo{person}{Xiaoyao Liang}, {and}
  \bibinfo{person}{Li Jiang}.} \bibinfo{year}{2017}\natexlab{}.
\newblock \showarticletitle{Accelerator-friendly neural-network training:
  Learning variations and defects in RRAM crossbar}. In
  \bibinfo{booktitle}{\emph{Proceedings of the Conference on Design, Automation
  \& Test in Europe (DATE)}}. \bibinfo{publisher}{European Design and
  Automation Association}, \bibinfo{pages}{19--24}.
\newblock


\bibitem[\protect\citeauthoryear{Chen, Krishna, Emer, and Sze}{Chen
  et~al\mbox{.}}{2016}]%
        {chen2016eyeriss}
\bibfield{author}{\bibinfo{person}{Yu-Hsin Chen}, \bibinfo{person}{Tushar
  Krishna}, \bibinfo{person}{Joel~S Emer}, {and} \bibinfo{person}{Vivienne
  Sze}.} \bibinfo{year}{2016}\natexlab{}.
\newblock \showarticletitle{Eyeriss: An energy-efficient reconfigurable
  accelerator for deep convolutional neural networks}. In
  \bibinfo{booktitle}{\emph{IEEE Journal of Solid-State Circuits (JSSC)}}.
  \bibinfo{publisher}{IEEE}, \bibinfo{pages}{127--138}.
\newblock


\bibitem[\protect\citeauthoryear{Chiu, Mehrotra, Mohan, and Ranka}{Chiu
  et~al\mbox{.}}{1993}]%
        {chiu1993robustness}
\bibfield{author}{\bibinfo{person}{C-T Chiu}, \bibinfo{person}{Kishan
  Mehrotra}, \bibinfo{person}{Chilukuri~K Mohan}, {and} \bibinfo{person}{Sanjay
  Ranka}.} \bibinfo{year}{1993}\natexlab{}.
\newblock \showarticletitle{Robustness of feedforward neural networks}. In
  \bibinfo{booktitle}{\emph{IEEE International Conference on Neural Networks
  (ICNN)}}. \bibinfo{publisher}{IEEE}, \bibinfo{pages}{783--788}.
\newblock


\bibitem[\protect\citeauthoryear{Chu and Wah}{Chu and Wah}{1990}]%
        {chu1990fault}
\bibfield{author}{\bibinfo{person}{L-C Chu} {and} \bibinfo{person}{Benjamin~W
  Wah}.} \bibinfo{year}{1990}\natexlab{}.
\newblock \showarticletitle{Fault tolerant neural networks with hybrid
  redundancy}. In \bibinfo{booktitle}{\emph{International Joint Conference on
  Neural Networks (IJCNN)}}. \bibinfo{publisher}{IEEE},
  \bibinfo{pages}{639--649}.
\newblock


\bibitem[\protect\citeauthoryear{Deng, Rang, Du, Wang, Li, Temam, Ienne, Novo,
  Li, Chen, et~al\mbox{.}}{Deng et~al\mbox{.}}{2015}]%
        {deng2015retraining}
\bibfield{author}{\bibinfo{person}{Jiacnao Deng}, \bibinfo{person}{Yuntan
  Rang}, \bibinfo{person}{Zidong Du}, \bibinfo{person}{Ymg Wang},
  \bibinfo{person}{Huawei Li}, \bibinfo{person}{Olivier Temam},
  \bibinfo{person}{Paolo Ienne}, \bibinfo{person}{David Novo},
  \bibinfo{person}{Xiaowei Li}, \bibinfo{person}{Yunji Chen}, {et~al\mbox{.}}}
  \bibinfo{year}{2015}\natexlab{}.
\newblock \showarticletitle{Retraining-based timing error mitigation for
  hardware neural networks}. In \bibinfo{booktitle}{\emph{Design, Automation \&
  Test in Europe Conference \& Exhibition (DATE)}}. \bibinfo{publisher}{IEEE},
  \bibinfo{pages}{593--596}.
\newblock


\bibitem[\protect\citeauthoryear{Dias, Borralho, Fontes, and Antunes}{Dias
  et~al\mbox{.}}{2010}]%
        {dias2010ftset}
\bibfield{author}{\bibinfo{person}{Fernando~Morgado Dias}, \bibinfo{person}{Rui
  Borralho}, \bibinfo{person}{Pedro Fontes}, {and} \bibinfo{person}{Ana
  Antunes}.} \bibinfo{year}{2010}\natexlab{}.
\newblock \showarticletitle{FTSET-a software tool for fault tolerance
  evaluation and improvement}. In \bibinfo{booktitle}{\emph{Neural Computing
  and Applications (Neural. Comput. Appl.)}}, Vol.~\bibinfo{volume}{19}.
  \bibinfo{publisher}{Springer}, \bibinfo{pages}{701--712}.
\newblock


\bibitem[\protect\citeauthoryear{Goodfellow, Shlens, and Szegedy}{Goodfellow
  et~al\mbox{.}}{2015}]%
        {GoodfellowSS14}
\bibfield{author}{\bibinfo{person}{Ian~J. Goodfellow},
  \bibinfo{person}{Jonathon Shlens}, {and} \bibinfo{person}{Christian
  Szegedy}.} \bibinfo{year}{2015}\natexlab{}.
\newblock \showarticletitle{Explaining and Harnessing Adversarial Examples}. In
  \bibinfo{booktitle}{\emph{3rd International Conference on Learning
  Representations (ICLR)}}. \bibinfo{publisher}{OpenReview.net}.
\newblock


\bibitem[\protect\citeauthoryear{Han, Mao, and Dally}{Han
  et~al\mbox{.}}{2016}]%
        {han2015deep}
\bibfield{author}{\bibinfo{person}{Song Han}, \bibinfo{person}{Huizi Mao},
  {and} \bibinfo{person}{William~J. Dally}.} \bibinfo{year}{2016}\natexlab{}.
\newblock \showarticletitle{Deep Compression: Compressing Deep Neural Network
  with Pruning, Trained Quantization and Huffman Coding}. In
  \bibinfo{booktitle}{\emph{4th International Conference on Learning
  Representations (ICLR)}}. \bibinfo{publisher}{OpenReview.net}.
\newblock


\bibitem[\protect\citeauthoryear{Hinton, Vinyals, and Dean}{Hinton
  et~al\mbox{.}}{2014}]%
        {hinton2015distilling}
\bibfield{author}{\bibinfo{person}{Geoffrey Hinton}, \bibinfo{person}{Oriol
  Vinyals}, {and} \bibinfo{person}{Jeffrey Dean}.}
  \bibinfo{year}{2014}\natexlab{}.
\newblock \showarticletitle{Distilling the Knowledge in a Neural Network}. In
  \bibinfo{booktitle}{\emph{NIPS Deep Learning and Representation Learning
  Workshop (NIPS Workshop)}}. \bibinfo{publisher}{Curran Associates Inc.}
\newblock


\bibitem[\protect\citeauthoryear{Hong, Frigo, Kaya, Giuffrida, and
  Dumitraș}{Hong et~al\mbox{.}}{2019}]%
        {hong2019terminal}
\bibfield{author}{\bibinfo{person}{Sanghyun Hong}, \bibinfo{person}{Pietro
  Frigo}, \bibinfo{person}{Yi{\u{g}}itcan Kaya}, \bibinfo{person}{Cristiano
  Giuffrida}, {and} \bibinfo{person}{Tudor Dumitraș}.}
  \bibinfo{year}{2019}\natexlab{}.
\newblock \showarticletitle{Terminal brain damage: Exposing the graceless
  degradation in deep neural networks under hardware fault attacks}. In
  \bibinfo{booktitle}{\emph{28th $\{$USENIX$\}$ Security Symposium (USENIX
  Security)}}. \bibinfo{publisher}{USENIX Association},
  \bibinfo{pages}{497--514}.
\newblock


\bibitem[\protect\citeauthoryear{Howard, Zhu, Chen, Kalenichenko, Wang, Weyand,
  Andreetto, and Adam}{Howard et~al\mbox{.}}{2017}]%
        {howard2017mobilenets}
\bibfield{author}{\bibinfo{person}{Andrew~G Howard}, \bibinfo{person}{Menglong
  Zhu}, \bibinfo{person}{Bo Chen}, \bibinfo{person}{Dmitry Kalenichenko},
  \bibinfo{person}{Weijun Wang}, \bibinfo{person}{Tobias Weyand},
  \bibinfo{person}{Marco Andreetto}, {and} \bibinfo{person}{Hartwig Adam}.}
  \bibinfo{year}{2017}\natexlab{}.
\newblock \showarticletitle{Mobilenets: Efficient convolutional neural networks
  for mobile vision applications}.
\newblock \bibinfo{journal}{\emph{arXiv preprint arXiv:1704.04861}}
  (\bibinfo{year}{2017}).
\newblock


\bibitem[\protect\citeauthoryear{ISO}{ISO}{2016}]%
        {iso26262}
\bibfield{author}{\bibinfo{person}{ISO}.} \bibinfo{year}{{2016}}\natexlab{}.
\newblock \showarticletitle{{ISO-26262: Road vehicles -- Functional safety}}.
  \bibinfo{publisher}{{ISO, Geneva, Switzerland}}.
\newblock


\bibitem[\protect\citeauthoryear{Jouppi and et~al.}{Jouppi and et~al.}{2017}]%
        {jouppi2017datacenter}
\bibfield{author}{\bibinfo{person}{Norman~P Jouppi} {and} \bibinfo{person}{et
  al.}} \bibinfo{year}{2017}\natexlab{}.
\newblock \showarticletitle{In-datacenter performance analysis of a tensor
  processing unit}. In \bibinfo{booktitle}{\emph{ACM/IEEE 44th Annual
  International Symposium on Computer Architecture (ISCA)}}.
  \bibinfo{publisher}{IEEE}, \bibinfo{pages}{1--12}.
\newblock


\bibitem[\protect\citeauthoryear{Kim, Howe, Moreau, Alaghi, Ceze, and
  Sathe}{Kim et~al\mbox{.}}{2018}]%
        {kim2018matic}
\bibfield{author}{\bibinfo{person}{Sung Kim}, \bibinfo{person}{Patrick Howe},
  \bibinfo{person}{Thierry Moreau}, \bibinfo{person}{Armin Alaghi},
  \bibinfo{person}{Luis Ceze}, {and} \bibinfo{person}{Visvesh Sathe}.}
  \bibinfo{year}{2018}\natexlab{}.
\newblock \showarticletitle{MATIC: Learning around errors for efficient
  low-voltage neural network accelerators}. In
  \bibinfo{booktitle}{\emph{Design, Automation \& Test in Europe Conference \&
  Exhibition (DATE)}}. \bibinfo{publisher}{IEEE}, \bibinfo{pages}{1--6}.
\newblock


\bibitem[\protect\citeauthoryear{Kim, Ross, and et. al.}{Kim
  et~al\mbox{.}}{2014}]%
        {kim2014flipping}
\bibfield{author}{\bibinfo{person}{Yoongu Kim}, \bibinfo{person}{Daly Ross},
  {and} \bibinfo{person}{et. al.}} \bibinfo{year}{2014}\natexlab{}.
\newblock \showarticletitle{Flipping bits in memory without accessing them: An
  experimental study of DRAM disturbance errors}. In
  \bibinfo{booktitle}{\emph{{ACM/IEEE} 41st International Symposium on Computer
  Architecture (ISCA)}}. \bibinfo{publisher}{IEEE}, \bibinfo{pages}{361--372}.
\newblock


\bibitem[\protect\citeauthoryear{Krishnamoorthi}{Krishnamoorthi}{2018}]%
        {krishnamoorthi2018quantizing}
\bibfield{author}{\bibinfo{person}{Raghuraman Krishnamoorthi}.}
  \bibinfo{year}{2018}\natexlab{}.
\newblock \showarticletitle{Quantizing deep convolutional networks for
  efficient inference: A whitepaper}. In \bibinfo{booktitle}{\emph{arXiv
  preprint arXiv:1806.08342}}.
\newblock


\bibitem[\protect\citeauthoryear{Krizhevsky, Hinton, et~al\mbox{.}}{Krizhevsky
  et~al\mbox{.}}{2009}]%
        {krizhevsky2009learning}
\bibfield{author}{\bibinfo{person}{Alex Krizhevsky}, \bibinfo{person}{Geoffrey
  Hinton}, {et~al\mbox{.}}} \bibinfo{year}{2009}\natexlab{}.
\newblock \bibinfo{booktitle}{\emph{Learning multiple layers of features from
  tiny images}}.
\newblock \bibinfo{type}{{T}echnical {R}eport}.
  \bibinfo{institution}{Citeseer}.
\newblock


\bibitem[\protect\citeauthoryear{Kurakin, Goodfellow, and Bengio}{Kurakin
  et~al\mbox{.}}{2017}]%
        {KurakinGB17}
\bibfield{author}{\bibinfo{person}{Alexey Kurakin}, \bibinfo{person}{Ian~J.
  Goodfellow}, {and} \bibinfo{person}{Samy Bengio}.}
  \bibinfo{year}{2017}\natexlab{}.
\newblock \showarticletitle{Adversarial Machine Learning at Scale}. In
  \bibinfo{booktitle}{\emph{5th International Conference on Learning
  Representations (ICLR)}}. \bibinfo{publisher}{OpenReview.net}.
\newblock


\bibitem[\protect\citeauthoryear{Le and Yang}{Le and Yang}{2015}]%
        {le2015tiny}
\bibfield{author}{\bibinfo{person}{Ya Le} {and} \bibinfo{person}{Xuan Yang}.}
  \bibinfo{year}{2015}\natexlab{}.
\newblock \showarticletitle{Tiny imagenet visual recognition challenge}.
\newblock \bibinfo{journal}{\emph{Stanford CS 231N Course}}.
\newblock


\bibitem[\protect\citeauthoryear{LeCun, Denker, and Solla}{LeCun
  et~al\mbox{.}}{1990}]%
        {lecun1990optimal}
\bibfield{author}{\bibinfo{person}{Yann LeCun}, \bibinfo{person}{John~S
  Denker}, {and} \bibinfo{person}{Sara~A Solla}.}
  \bibinfo{year}{1990}\natexlab{}.
\newblock \showarticletitle{Optimal brain damage}. In
  \bibinfo{booktitle}{\emph{Advances in neural information processing systems
  (NIPS)}}. \bibinfo{publisher}{Curran Associates Inc.},
  \bibinfo{pages}{598--605}.
\newblock


\bibitem[\protect\citeauthoryear{Li and et~al.}{Li and et~al.}{2017}]%
        {li2017understanding}
\bibfield{author}{\bibinfo{person}{Guanpeng Li} {and} \bibinfo{person}{et al.}}
  \bibinfo{year}{2017}\natexlab{}.
\newblock \showarticletitle{Understanding error propagation in deep learning
  neural network (DNN) accelerators and applications}. In
  \bibinfo{booktitle}{\emph{Proceedings of the International Conference for
  High Performance Computing, Networking, Storage and Analysis (SC)}}.
  \bibinfo{publisher}{ACM}, \bibinfo{pages}{8:1--8:12}.
\newblock


\bibitem[\protect\citeauthoryear{Li, Liu, Li, Tian, Luo, and Xu}{Li
  et~al\mbox{.}}{2019}]%
        {li2019d2nn}
\bibfield{author}{\bibinfo{person}{Yu Li}, \bibinfo{person}{Yannan Liu},
  \bibinfo{person}{Min Li}, \bibinfo{person}{Ye Tian}, \bibinfo{person}{Bo
  Luo}, {and} \bibinfo{person}{Qiang Xu}.} \bibinfo{year}{2019}\natexlab{}.
\newblock \showarticletitle{D2NN: a fine-grained dual modular redundancy
  framework for deep neural networks}. In \bibinfo{booktitle}{\emph{Proceedings
  of the 35th Annual Computer Security Applications Conference (ACSAC)}}.
  \bibinfo{publisher}{ACM}, \bibinfo{pages}{138--147}.
\newblock


\bibitem[\protect\citeauthoryear{Liu, Hu, Strachan, and Li}{Liu
  et~al\mbox{.}}{2017a}]%
        {liu2017rescuing}
\bibfield{author}{\bibinfo{person}{Chenchen Liu}, \bibinfo{person}{Miao Hu},
  \bibinfo{person}{John~Paul Strachan}, {and} \bibinfo{person}{Hai Li}.}
  \bibinfo{year}{2017}\natexlab{a}.
\newblock \showarticletitle{Rescuing memristor-based neuromorphic design with
  high defects}. In \bibinfo{booktitle}{\emph{54th ACM/EDAC/IEEE Design
  Automation Conference (DAC)}}. \bibinfo{publisher}{IEEE},
  \bibinfo{pages}{1--6}.
\newblock


\bibitem[\protect\citeauthoryear{Liu, Wei, Luo, and Xu}{Liu
  et~al\mbox{.}}{2017b}]%
        {liu2017fault}
\bibfield{author}{\bibinfo{person}{Yannan Liu}, \bibinfo{person}{Lingxiao Wei},
  \bibinfo{person}{Bo Luo}, {and} \bibinfo{person}{Qiang Xu}.}
  \bibinfo{year}{2017}\natexlab{b}.
\newblock \showarticletitle{Fault injection attack on deep neural network}. In
  \bibinfo{booktitle}{\emph{IEEE/ACM International Conference on Computer-Aided
  Design (ICCAD)}}. \bibinfo{publisher}{IEEE}, \bibinfo{pages}{131--138}.
\newblock


\bibitem[\protect\citeauthoryear{Luo, Liu, Wei, and Xu}{Luo
  et~al\mbox{.}}{2018}]%
        {luo2018towards}
\bibfield{author}{\bibinfo{person}{Bo Luo}, \bibinfo{person}{Yannan Liu},
  \bibinfo{person}{Lingxiao Wei}, {and} \bibinfo{person}{Qiang Xu}.}
  \bibinfo{year}{2018}\natexlab{}.
\newblock \showarticletitle{Towards imperceptible and robust adversarial
  example attacks against neural networks}. In
  \bibinfo{booktitle}{\emph{Proceedings of the Thirty-Second AAAI Conference on
  Artificial Intelligence (AAAI)}}. \bibinfo{publisher}{AAAI Press},
  \bibinfo{pages}{1652--1659}.
\newblock


\bibitem[\protect\citeauthoryear{Madry, Makelov, Schmidt, Tsipras, and
  Vladu}{Madry et~al\mbox{.}}{2018}]%
        {MadryMSTV18}
\bibfield{author}{\bibinfo{person}{Aleksander Madry},
  \bibinfo{person}{Aleksandar Makelov}, \bibinfo{person}{Ludwig Schmidt},
  \bibinfo{person}{Dimitris Tsipras}, {and} \bibinfo{person}{Adrian Vladu}.}
  \bibinfo{year}{2018}\natexlab{}.
\newblock \showarticletitle{Towards Deep Learning Models Resistant to
  Adversarial Attacks}. In \bibinfo{booktitle}{\emph{6th International
  Conference on Learning Representations (ICLR)}}.
  \bibinfo{publisher}{OpenReview.net}.
\newblock


\bibitem[\protect\citeauthoryear{Mahdiani, Fakhraie, and Lucas}{Mahdiani
  et~al\mbox{.}}{2012}]%
        {mahdiani2012relaxed}
\bibfield{author}{\bibinfo{person}{Hamid~Reza Mahdiani},
  \bibinfo{person}{Sied~Mehdi Fakhraie}, {and} \bibinfo{person}{Caro Lucas}.}
  \bibinfo{year}{2012}\natexlab{}.
\newblock \showarticletitle{Relaxed fault-tolerant hardware implementation of
  neural networks in the presence of multiple transient errors}.
\newblock \bibinfo{journal}{\emph{IEEE transactions on neural networks and
  learning systems (TNNLS)}}  \bibinfo{volume}{23},
  \bibinfo{pages}{1215--1228}.
\newblock


\bibitem[\protect\citeauthoryear{Matsubayashi, Satoh, and Ishii}{Matsubayashi
  et~al\mbox{.}}{2016}]%
        {matsubayashi2016clock}
\bibfield{author}{\bibinfo{person}{Masato Matsubayashi},
  \bibinfo{person}{Akashi Satoh}, {and} \bibinfo{person}{Jun Ishii}.}
  \bibinfo{year}{2016}\natexlab{}.
\newblock \showarticletitle{Clock glitch generator on SAKURA-G for fault
  injection attack against a cryptographic circuit}. In
  \bibinfo{booktitle}{\emph{IEEE 5th Global Conference on Consumer Electronics
  (GCCE)}}. \bibinfo{publisher}{IEEE}, \bibinfo{pages}{1--4}.
\newblock


\bibitem[\protect\citeauthoryear{Meng and Chen}{Meng and Chen}{2017}]%
        {meng2017magnet}
\bibfield{author}{\bibinfo{person}{Dongyu Meng} {and} \bibinfo{person}{Hao
  Chen}.} \bibinfo{year}{2017}\natexlab{}.
\newblock \showarticletitle{Magnet: a two-pronged defense against adversarial
  examples}. In \bibinfo{booktitle}{\emph{Proceedings of the ACM SIGSAC
  conference on computer and communications security (CCS)}}.
  \bibinfo{publisher}{ACM}, \bibinfo{pages}{135--147}.
\newblock


\bibitem[\protect\citeauthoryear{Papernot, McDaniel, Goodfellow, Jha, Celik,
  and Swami}{Papernot et~al\mbox{.}}{2017}]%
        {papernot2017practical}
\bibfield{author}{\bibinfo{person}{Nicolas Papernot}, \bibinfo{person}{Patrick
  McDaniel}, \bibinfo{person}{Ian Goodfellow}, \bibinfo{person}{Somesh Jha},
  \bibinfo{person}{Z~Berkay Celik}, {and} \bibinfo{person}{Ananthram Swami}.}
  \bibinfo{year}{2017}\natexlab{}.
\newblock \showarticletitle{Practical black-box attacks against machine
  learning}. In \bibinfo{booktitle}{\emph{Proceedings of the ACM on Asia
  conference on computer and communications security (Asia CCS)}}.
  \bibinfo{publisher}{ACM}, \bibinfo{pages}{506--519}.
\newblock


\bibitem[\protect\citeauthoryear{Rakin, He, and Fan}{Rakin
  et~al\mbox{.}}{2019}]%
        {rakin2019bit}
\bibfield{author}{\bibinfo{person}{Adnan~Siraj Rakin}, \bibinfo{person}{Zhezhi
  He}, {and} \bibinfo{person}{Deliang Fan}.} \bibinfo{year}{2019}\natexlab{}.
\newblock \showarticletitle{Bit-Flip Attack: Crushing Neural Network with
  Progressive Bit Search}. In \bibinfo{booktitle}{\emph{IEEE/CVF International
  Conference on Computer Vision (ICCV)}}. \bibinfo{publisher}{IEEE},
  \bibinfo{pages}{1211--1220}.
\newblock


\bibitem[\protect\citeauthoryear{Reagen, Gupta, Pentecost, Whatmough, Lee,
  Mulholland, Brooks, and Wei}{Reagen et~al\mbox{.}}{2018}]%
        {reagen2018ares}
\bibfield{author}{\bibinfo{person}{Brandon Reagen}, \bibinfo{person}{Udit
  Gupta}, \bibinfo{person}{Lillian Pentecost}, \bibinfo{person}{Paul
  Whatmough}, \bibinfo{person}{Sae~Kyu Lee}, \bibinfo{person}{Niamh
  Mulholland}, \bibinfo{person}{David Brooks}, {and} \bibinfo{person}{Gu-Yeon
  Wei}.} \bibinfo{year}{2018}\natexlab{}.
\newblock \showarticletitle{Ares: A framework for quantifying the resilience of
  deep neural networks}. In \bibinfo{booktitle}{\emph{55th ACM/ESDA/IEEE Design
  Automation Conference (DAC)}}. \bibinfo{publisher}{IEEE},
  \bibinfo{pages}{1--6}.
\newblock


\bibitem[\protect\citeauthoryear{Reagen, Whatmough, and Adolf}{Reagen
  et~al\mbox{.}}{2016}]%
        {reagen2016minerva}
\bibfield{author}{\bibinfo{person}{Brandon Reagen}, \bibinfo{person}{Paul
  Whatmough}, {and} \bibinfo{person}{et.~al. Adolf}.}
  \bibinfo{year}{2016}\natexlab{}.
\newblock \showarticletitle{Minerva: Enabling low-power, highly-accurate deep
  neural network accelerators}. In \bibinfo{booktitle}{\emph{ACM/IEEE 43rd
  Annual International Symposium on Computer Architecture (ISCA)}}.
  \bibinfo{publisher}{IEEE}, \bibinfo{pages}{267--278}.
\newblock


\bibitem[\protect\citeauthoryear{Romailler and Pelissier}{Romailler and
  Pelissier}{2017}]%
        {romailler2017practical}
\bibfield{author}{\bibinfo{person}{Yolan Romailler} {and}
  \bibinfo{person}{Sylvain Pelissier}.} \bibinfo{year}{2017}\natexlab{}.
\newblock \showarticletitle{Practical fault attack against the Ed25519 and
  EdDSA signature schemes}. In \bibinfo{booktitle}{\emph{Workshop on Fault
  Diagnosis and Tolerance in Cryptography (FDTC)}}. \bibinfo{publisher}{IEEE},
  \bibinfo{pages}{17--24}.
\newblock


\bibitem[\protect\citeauthoryear{Schorn, Elsken, Vogel, Runge, Guntoro, and
  Ascheid}{Schorn et~al\mbox{.}}{2020}]%
        {schorn2019automated}
\bibfield{author}{\bibinfo{person}{Christoph Schorn}, \bibinfo{person}{Thomas
  Elsken}, \bibinfo{person}{Sebastian Vogel}, \bibinfo{person}{Armin Runge},
  \bibinfo{person}{Andre Guntoro}, {and} \bibinfo{person}{Gerd Ascheid}.}
  \bibinfo{year}{2020}\natexlab{}.
\newblock \showarticletitle{Automated design of error-resilient and
  hardware-efficient deep neural networks}. In \bibinfo{booktitle}{\emph{Neural
  Computing and Applications (Neural. Comput. Appl.)}}.
  \bibinfo{publisher}{Springer}, \bibinfo{pages}{1--19}.
\newblock


\bibitem[\protect\citeauthoryear{Schorn, Guntoro, and Ascheid}{Schorn
  et~al\mbox{.}}{2018}]%
        {schorn2018efficient}
\bibfield{author}{\bibinfo{person}{Christoph Schorn}, \bibinfo{person}{Andre
  Guntoro}, {and} \bibinfo{person}{Gerd Ascheid}.}
  \bibinfo{year}{2018}\natexlab{}.
\newblock \showarticletitle{Efficient On-Line Error Detection and Mitigation
  for Deep Neural Network Accelerators}. In
  \bibinfo{booktitle}{\emph{International Conference on Computer Safety,
  Reliability, and Security (SAFECOMP)}}. \bibinfo{publisher}{Springer},
  \bibinfo{pages}{205--219}.
\newblock


\bibitem[\protect\citeauthoryear{Stallkamp, Schlipsing, Salmen, and
  Igel}{Stallkamp et~al\mbox{.}}{2012}]%
        {Stallkamp2012}
\bibfield{author}{\bibinfo{person}{Johannes Stallkamp}, \bibinfo{person}{Marc
  Schlipsing}, \bibinfo{person}{Jan Salmen}, {and} \bibinfo{person}{Christian
  Igel}.} \bibinfo{year}{2012}\natexlab{}.
\newblock \showarticletitle{Man vs. computer: Benchmarking machine learning
  algorithms for traffic sign recognition}.
\newblock \bibinfo{journal}{\emph{Neural Networks}}  \bibinfo{volume}{32},
  \bibinfo{pages}{323--332}.
\newblock


\bibitem[\protect\citeauthoryear{Tan and Le}{Tan and Le}{2019}]%
        {tan2019efficientnet}
\bibfield{author}{\bibinfo{person}{Mingxing Tan} {and} \bibinfo{person}{Quoc~V.
  Le}.} \bibinfo{year}{2019}\natexlab{}.
\newblock \showarticletitle{EfficientNet: Rethinking Model Scaling for
  Convolutional Neural Networks}. In \bibinfo{booktitle}{\emph{Proceedings of
  the 36th International Conference on Machine Learning (ICML)}},
  Vol.~\bibinfo{volume}{97}. \bibinfo{publisher}{PMLR},
  \bibinfo{pages}{6105--6114}.
\newblock


\bibitem[\protect\citeauthoryear{Xia, Liu, Ning, Chakrabarty, and Wang}{Xia
  et~al\mbox{.}}{2017}]%
        {xia2017fault}
\bibfield{author}{\bibinfo{person}{Lixue Xia}, \bibinfo{person}{Mengyun Liu},
  \bibinfo{person}{Xuefei Ning}, \bibinfo{person}{Krishnendu Chakrabarty},
  {and} \bibinfo{person}{Yu Wang}.} \bibinfo{year}{2017}\natexlab{}.
\newblock \showarticletitle{Fault-tolerant training with on-line fault
  detection for RRAM-based neural computing systems}. In
  \bibinfo{booktitle}{\emph{Proceedings of the 54th Annual Design Automation
  Conference 2017 (DAC)}}. \bibinfo{publisher}{ACM},
  \bibinfo{pages}{33:1--33:6}.
\newblock


\bibitem[\protect\citeauthoryear{Yan, Shi, Liao, Hashimoto, Zhou, and Zhuo}{Yan
  et~al\mbox{.}}{2020}]%
        {yan2020single}
\bibfield{author}{\bibinfo{person}{Zheyu Yan}, \bibinfo{person}{Yiyu Shi},
  \bibinfo{person}{Wang Liao}, \bibinfo{person}{Masanori Hashimoto},
  \bibinfo{person}{Xichuan Zhou}, {and} \bibinfo{person}{Cheng Zhuo}.}
  \bibinfo{year}{2020}\natexlab{}.
\newblock \showarticletitle{When Single Event Upset Meets Deep Neural Networks:
  Observations, Explorations, and Remedies}. In \bibinfo{booktitle}{\emph{25th
  Asia and South Pacific Design Automation Conference (ASP-DAC)}}.
  \bibinfo{publisher}{IEEE}, \bibinfo{pages}{163--168}.
\newblock


\bibitem[\protect\citeauthoryear{Yang and Murmann}{Yang and Murmann}{2017}]%
        {yang2017sram}
\bibfield{author}{\bibinfo{person}{Lita Yang} {and} \bibinfo{person}{Boris
  Murmann}.} \bibinfo{year}{2017}\natexlab{}.
\newblock \showarticletitle{SRAM voltage scaling for energy-efficient
  convolutional neural networks}. In \bibinfo{booktitle}{\emph{18th
  International Symposium on Quality Electronic Design (ISQED)}}.
  \bibinfo{publisher}{IEEE}, \bibinfo{pages}{7--12}.
\newblock


\bibitem[\protect\citeauthoryear{Yao, Rakin, and Fan}{Yao
  et~al\mbox{.}}{2020}]%
        {yao2020deephammer}
\bibfield{author}{\bibinfo{person}{Fan Yao}, \bibinfo{person}{Adnan~Siraj
  Rakin}, {and} \bibinfo{person}{Deliang Fan}.}
  \bibinfo{year}{2020}\natexlab{}.
\newblock \showarticletitle{DeepHammer: Depleting the Intelligence of Deep
  Neural Networks through Targeted Chain of Bit Flips}. In
  \bibinfo{booktitle}{\emph{29th USENIX Security Symposium (USENIX Security)}}.
  \bibinfo{publisher}{USENIX Association}, \bibinfo{pages}{1463--1480}.
\newblock


\bibitem[\protect\citeauthoryear{Zhao, Wang, Gongye, Wang, Fei, and Lin}{Zhao
  et~al\mbox{.}}{2019}]%
        {zhao2019fault}
\bibfield{author}{\bibinfo{person}{Pu Zhao}, \bibinfo{person}{Siyue Wang},
  \bibinfo{person}{Cheng Gongye}, \bibinfo{person}{Yanzhi Wang},
  \bibinfo{person}{Yunsi Fei}, {and} \bibinfo{person}{Xue Lin}.}
  \bibinfo{year}{2019}\natexlab{}.
\newblock \showarticletitle{Fault Sneaking Attack: a Stealthy Framework for
  Misleading Deep Neural Networks}. In \bibinfo{booktitle}{\emph{Proceedings of
  the 56th Annual Design Automation Conference 2019 (DAC)}}.
  \bibinfo{publisher}{ACM}, \bibinfo{pages}{165}.
\newblock


\bibitem[\protect\citeauthoryear{Zhezhi, Adnan, Jingtao, Chaitali, and
  Fan}{Zhezhi et~al\mbox{.}}{2020}]%
        {He2020cvpr}
\bibfield{author}{\bibinfo{person}{He Zhezhi}, \bibinfo{person}{Rakin Adnan,
  Siraj}, \bibinfo{person}{Li Jingtao}, \bibinfo{person}{Chakrabarti Chaitali},
  {and} \bibinfo{person}{Deliang Fan}.} \bibinfo{year}{2020}\natexlab{}.
\newblock \showarticletitle{Defending and Harnessing the Bit-Flip based
  Adversarial Weight Attack}. In \bibinfo{booktitle}{\emph{IEEE/CVF Conference
  on Computer Vision and Pattern Recognition (CVPR)}}.
  \bibinfo{publisher}{IEEE}, \bibinfo{pages}{14083--14091}.
\newblock


\end{thebibliography}

\end{document}